%% file: FinalVersion.tex
\documentclass{article}

\PassOptionsToPackage{compress}{natbib}

\usepackage[preprint]{neurips_2024}

\usepackage[utf8]{inputenc} 
\usepackage[T1]{fontenc}    
\usepackage[hidelinks]{hyperref}       
\usepackage{url}            
\usepackage{booktabs}       
\usepackage{amsfonts}       
\usepackage{graphicx}       
\usepackage{nicefrac}       
\usepackage{microtype}      
\usepackage{xcolor}         
\usepackage{amsmath}
\usepackage{amssymb}
\usepackage{amsthm}
\usepackage{caption}
\usepackage{listings}

\newtheorem{lemma}{Lemma}
\newtheorem{theorem}{Theorem}
\newtheorem{definition}{Definition}
\input{notes}

\newtheorem{property}{Property}

\usepackage{faktor}
\usepackage{array}
\usepackage{geometry}

\title{Logical Embeddings for Argument Analysis}

\author{%
  Leander Heldring \\
  Kellogg School of Management \\
  Northwestern University \\
  2211 Campus Drive, Evanston, IL 60208 \\
  \texttt{leander.heldring@kellogg.northwestern.edu}
  \And
  Santiago Torres \\
  Department of Economics \\
  MIT \\
  50 Memorial Drive, Cambridge, MA 02142 \\
  \texttt{storresp@mit.edu}
}

\begin{document}

\maketitle

\begin{abstract}
We propose a new framework for machine-learning-oriented argument analysis tasks. Our proposal involves replacing traditional contextualized word embeddings used in most NLP tasks with \textit{logical embeddings}, an alternative encoding that directly exploits argumentation structures. In essence, \textit{logical embeddings} encapsulate the logical semantics of an argument, allowing for a better representation of its meaning. Supporting these embeddings is a mathematical logic-based similarity measure that offers a transparent notion of proximity and is guaranteed to satisfy several desirable theoretical properties that current cosine similarity-based contextualized word embeddings cannot assure. This similarity measure induces a positive semi-definite kernel on the set of arguments, enabling us to uniquely define logical embeddings using the theory of Reproducing Kernel Hilbert Spaces (RKHS). Moreover, we prove that this encoding is optimal, in the sense that no logical information is lost in the process. As with other RKHS applications, \textit{logical embeddings} can be used in numerous supervised and unsupervised tasks. We provide an implementation of the method and aim to test it against literature benchmarks. Additionally, we demonstrate that \textit{logical embeddings} outperform most standard embedding methods on a classification task.
\end{abstract}

\section{Introduction}

The advent of contextualized word embeddings has revolutionized the field of Natural Language Processing (NLP) \citep{vaswani2017attention,reimers2019,Liu2023}. One area that has significantly benefited from this innovation is argument and debate analysis, which features tasks such as argument detection, classification, similarity, and generation.


However, the adoption of contextualized word embeddings for argument learning tasks has not been without limitations, especially when assessing argument similarity. In argument similarity, state of the art methods often confuse similarity in argumentation, with similarity in other linguistic features. For example, in Figure \ref{fig:exhibit_figure}, we report three arguments from the IBM-ArgQ-6.3kArgs dataset \citep{toledo2019}. For the argument at the top, we prompt \texttt{gpt-4o-mini} to assign a full-argument logical-overlap score between the reference argument and each of the two candidate arguments shown at the bottom. The exact prompt used for this exercise is reported in Appendix~\ref{app:llm-prompts}. While the right hand side pair is clearly more similar, the left hand side pair gets a higher similarity. This is fundamentally because language models attend to more than the logical overlap between arguments.

 
\begin{figure}[t]
\centering
\caption{Misleading full-text similarity scores.}
\includegraphics[width=\columnwidth]{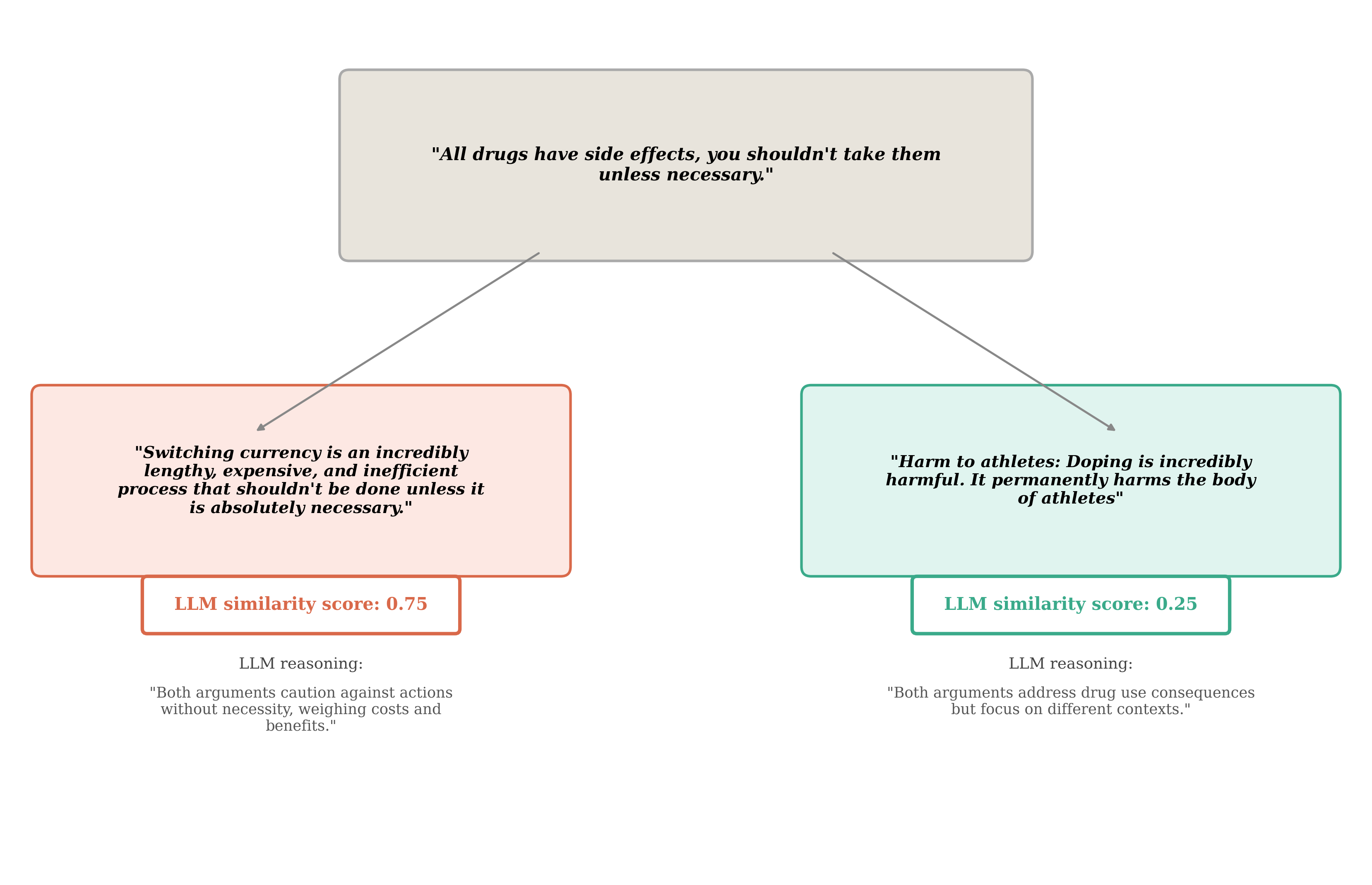}
\caption*{\textit{Notes}: \exhibit}
\label{fig:exhibit_figure}
\end{figure}

Notably, although the distinction between the pairs is clear to a human, the contextualized word embeddings represent the second pair as more similar vectors than the first pair as witnessed by a higher similarity score, causing the classifier to make errors.\\

This example is an instance of the point made by several authors (e.g. \cite{Liu2023}) that contextualized word embeddings might not be the optimal way to represent arguments. To humans, ``Argumentative reasoning is based on justifying a plausible conclusion with arguments in its favour'' \citep[p. iv]{David2021}. At its core, therefore, argument similarity rests ultimately in assessing the proximity of logical reasoning. In addition, similarity metrics based on the cosine similarity of embeddings have additional problems due to the lack of a notion of the size of an embedding \citep{Steck2024}.\\

In sum, while contextualized word embeddings are a great general-purpose technology for NLP tasks, they may be suboptimal for argument learning tasks. Moreover, their derived similarity metrics often fail to comply with desirable properties for such tools, such as the basic monotonicity requirement (i.e., two more similar arguments should receive a higher similarity score than two less similar arguments). This, in turn, can lead to errors in downstream tasks.\\

In this paper, we propose a new framework to represent and assess similarity between arguments:  \textit{logical embeddings}. As its name implies, \textit{logical embeddings} provide a way to encode only the logical content of arguments, thus avoiding many shortcomings of contextualized word embeddings.  This approach has three desirable properties. First,  \textit{logical embeddings} are \textit{optimal embeddings} in the sense that no logical information is lost in the representation. Second, the supporting similarity metric is ensured to satisfy theoretically desirable properties of a proximity notion, such as monotonicity and the triangle inequality. Third, as we will illustrate, logical embeddings provide a powerful tool to analyze complex argumentation schemes.\\


Our starting point is to introduce into the argument similarity literature a notion of similarity from mathematical logic \citep{David2021,simpson2013mathematical}. We outline the full necessary mathematical structure in the next section. In essence, this metric builds upon the definition of an argument as a logical object, composed of a set of premises and a conclusion, with a logical association that allows the latter to follow from the former (for a similar treatment from linguistics, see \cite{govier2013practical}). Argument similarity is then defined as the fraction of overlapping premises, modulo logical equivalence, together with the overlap of the logical consequences derived from its conclusion. The resulting similarity metric has numerous theoretical properties, among which are monotonicity and triangle inequality, that allow it to \textit{metricize} logic. Following \citet{David2021} and \citet{jaccard1908nouvelles}, we call this similarity measure the Syntactic-Semantic Jaccard measure.\\


We prove that the Syntactic-Semantic Jaccard similarity measure is a positive semi-definite kernel on the set of arguments. Following the literature on Reproducing Kernel Hilbert Spaces  (RKHS) \citep{Aronszajn1950}, such a property defines unique embeddings of arguments via their premise-conclusion decomposition. This gives rise to \textit{logical embeddings}. We further prove that, because of the desirable properties of the similarity metric that spans them, \textit{logical embeddings} preserve all logical information. This stems from the fact that the Syntactic-Semantic Jaccard similarity measure is a characteristic kernel \citep{fukumizu2004dimensionality, Fukumizu2008}, which guarantees an injective map from an argument to its embedding. As with other RKHS applications, \textit{logical embeddings} can be used for many supervised and unsupervised tasks (see \citet{Shawe-Taylor2004} and references therein).\\


We discuss how to implement the proposed framework in practice, and test performance. In short, we propose a three-step procedure. First, frontier Large Language Models (LLMs) can productively be used to decompose an argument into its constituent premises and conclusion. Second, an LLM maps a conclusion into its logical consequences. Third, we then train an entailment model to measure pairwise entailment among premises and (sub)conclusions. Steps 1 and 2 are essentially retrieval tasks, and standard methods perform adequately for this purpose. Our next steps involve decomposing conclusions into their logical consequences. This is important and best illustrated by example. The conclusions "The earth is round" and "The earth is round and blue" are not logically equivalent but have common implications—both imply that the earth is round—making them partially similar. LLMs perform very well on this task, and we propose an implementation. Finally, we train a standard entailment model to compute logical similarity between premises and conclusions in their Conjunctive Normal Form (CNF). These steps allow for the computation of the Syntactic-Semantic Jaccard measure, thereby enabling the practical use of \textit{logical embeddings}.\\

Finally, we test the performance of logical embeddings on a standard classification task, using the IBM-ArgQ-6.3kArgs corpus \citep{toledo2019}. Using this corpus, we construct a Gram matrix of pairwise argument-similarity scores and use its reduced-dimensional representation as a logical embedding for each argument. We compare these embeddings against standard semantic embedding baselines, including averaged GloVe vectors, BERT-base, RoBERTa-base, SBERT, and OpenAI text-embedding-3-small embeddings. Across logistic and linear regressions, as well as a small neural network and a random forest, embeddings outperform the semantic baselines. We then concatenate logical embeddings with each semantic embedding to combine logical and semantic information. The best performance is obtained either by logical embeddings alone or by logical embeddings combined with OpenAI text-embedding-3-small embeddings.



Our paper makes three key contributions. First, we introduce logical embeddings. We show that the minimal structure imposed on similarity computation by imposing the premise-conclusion structure and the Syntactic-Semantic Jaccard measure provide several desirable properties for resulting similarity score. Second, we prove that the similarity matrix resulting from the Syntactic-Semantic Jaccard measure on a set of arguments is an injective positive semi-definite kernel, and we, therefore, prove the optimality of the `logical embeddings' induced by this map. Finally, we propose an implementation pipeline and test the performance of logical embeddings against standard benchmarks.

\section{Logical embeddings}
\label{S:logical_embeddings}

This section develops the framework used in the rest of the paper. The construction proceeds in three steps. First, we model an argument as a collection of premises paired with a conclusion. Second, we define a logic-aware similarity function on pairs of arguments. Third, we prove that this similarity function is a positive semidefinite kernel. Accordingly, this kernel induces a feature map into a reproducing kernel Hilbert space (RKHS). We call that feature map a logical embedding.

\subsection{Arguments as premise--conclusion pairs}

We work with a propositional logic \((\mathcal L,\vdash)\), where \(\mathcal L\) is the language of formulas and \(\vdash\) is the logical consequence relation. Appendix~\ref{A:logic_preliminaries} gives the formal construction of these objects.

The definitions in this section use only the standard syntax of propositional logic. Atomic propositions are statements that can be assigned a truth value. Formulas are built from atomic propositions using the Boolean connectives: negation \((\neg)\), conjunction \((\land)\), disjunction \((\lor)\), implication \((\rightarrow)\), and biconditionality \((\leftrightarrow)\). Thus, if \(p\) and \(q\) are atomic propositions, then \(p\land q\), \(p\rightarrow q\), and \(\neg p\lor q\) are formulas. Finally, a literal is an atomic proposition or its negation.

The relation \(\vdash\) denotes logical consequence. For
\(\Gamma\subseteq\mathcal L\) and \(\phi\in\mathcal L\), the expression
\(\Gamma\vdash\phi\) means that every truth assignment satisfying all formulas in
\(\Gamma\) also satisfies \(\phi\).

For formulas \(\phi,\psi\in\mathcal L\), we say that \(\phi\) and \(\psi\) are logically equivalent, and write \(\phi\equiv\psi\), when they entail each other: \(\phi\vdash\psi\) and \(\psi\vdash\phi\). Here \(\phi\vdash\psi\) abbreviates \(\{\phi\}\vdash\psi\). Equivalently, \(\phi\equiv\psi\) if, and only if, \(\phi\) and \(\psi\) have the same truth conditions. We denote the equivalence class of \(\phi\) under \(\equiv\) by \([\phi]_{\equiv}=\{\psi\in\mathcal L:\psi\equiv\phi\}\).

\begin{definition}[Argument]
An argument is a pair
\[
a=(\Phi,\phi),
\]
where \(\Phi\subseteq\mathcal L\) is a finite set of premises and
\(\phi\in\mathcal L\) is a conclusion such that \(\Phi\vdash\phi\). The premise
set is required to be consistent, meaning that premises do
not contradict each other, and non-redundant, meaning that no premise is
dispensable for deriving \(\phi\). The set of all arguments is denoted
\(\operatorname{Arg}(\mathcal L)\).
\end{definition}

The point of this representation is to map natural-language arguments into a common premise--conclusion structure that captures their logical content. Under this structure, an argument is represented by the premises it uses and the conclusion those premises support. This makes it possible to compare arguments on the basis of their logical components rather than their wording. Appendix~\ref{A:logic_preliminaries} details the formal consistency and
non-redundancy conditions. Section~\ref{S:nlp_implementation} describes how we approximate this structure from text using NLP models.

This perspective determines an equivalence relation on arguments. Two arguments are identical for our purposes when they use the same premises up to logical equivalence and have logically equivalent conclusions. For a set of formulas \(\Phi\), define
\[
[\Phi]_{\equiv}=\{[\xi]_{\equiv}:\xi\in\Phi\},
\]
the set of equivalence classes represented by the formulas in \(\Phi\).

\begin{definition}[Argument equivalence]
For arguments \(a=(\Phi,\phi)\) and \(b=(\Psi,\psi)\), we write \(a\approx b\) if
\[
[\Phi]_{\equiv}=[\Psi]_{\equiv}
\qquad\text{and}\qquad
\phi\equiv\psi .
\]
\end{definition}

Thus, two arguments may be expressed differently and still be equivalent. For example, the premises ``John is Susan's brother'' and ``Susan is John's sister'' are worded differently but have the same truth conditions: each is true if and only if the other is true. The equivalence relation therefore identifies arguments by mutual entailment, not by wording.

\subsection{A syntactic--semantic similarity between arguments}
\label{S: ConstructingSim}

We now define the similarity function that measures logical proximity between arguments and induces the logical embedding. The construction follows \citet{David2021} and \citet{simpson2013mathematical}. It compares arguments along two dimensions: the premises they use and the logical content of their conclusions.

The main difficulty lies in comparing the conclusions. We do not compare formulas
by the full set of their logical consequences, since that set is typically
infinite and contains many equivalent reformulations. Instead, following
\citet{David2021}, we compare conclusions by fixing a finite set of non-redundant consequences.

We use conjunctive normal form (CNF) to fix these representatives. A formula is
in CNF if it is a conjunction of one or more clauses, each of which is a
disjunction of one or more literals. Informally, a CNF formula is obtained by
joining ``or'' statements with ``and'' connectives. For example,
\(p\to(q\land r)\) is logically equivalent to the CNF formula
\[
(\neg p\lor q)\land(\neg p\lor r).
\]
This restriction loses no expressive power: every propositional formula is
logically equivalent to some CNF formula \citep{russell1995modern}.

For a formula \(\phi\), let \(\operatorname{CN}_{\mathcal F}(\phi)\) be its set of logical consequences in CNF form. This is the finite set of CNF formulas that are entailed by \(\phi\), use only the literals relevant to \(\phi\), and are listed only once up to logical equivalence.\footnote{Appendix~\ref{A:logic_preliminaries} gives the formal construction of this set.}

For example,
\[
\operatorname{CN}_{\mathcal F}(p\land q)
=
\{p,q,p\lor q,p\land q\}.
\]
Indeed, \(p\land q\) entails \(p\), entails \(q\), entails \(p\lor q\), and entails itself. In this case, the relevant literals are \(p\) and \(q\), and the displayed set lists the distinct CNF consequences over those literals.

Given two arguments \(a=(\Phi,\phi)\) and \(b=(\Psi,\psi)\), define their premise similarity, or syntactic similarity, by the Jaccard overlap of their premise equivalence classes:
\[
s_{\mathrm{syn}}(\Phi,\Psi)
=
\frac{
|[\Phi]_{\equiv}\cap[\Psi]_{\equiv}|
}{
|[\Phi]_{\equiv}\cup[\Psi]_{\equiv}|
},
\]
with the convention that the ratio is \(1\) when both sets are empty.

Define the conclusion similarity, or semantic similarity, by the Jaccard overlap of the sets of logical consequences in CNF form:

\[
s_{\mathrm{sem}}(\phi,\psi)
=
\frac{
|\operatorname{CN}_{\mathcal F}(\phi)
\cap
\operatorname{CN}_{\mathcal F}(\psi)|
}{
|\operatorname{CN}_{\mathcal F}(\phi)
\cup
\operatorname{CN}_{\mathcal F}(\psi)|
},
\]
again with the convention that the ratio is \(1\) when both sets are empty.

The syntactic--semantic Jaccard similarity of $a$ and $b$ is the convex
combination
\[
\operatorname{sim}^{\sigma}(a,b)
=
\sigma s_{\mathrm{syn}}(\Phi,\Psi)
+
(1-\sigma)s_{\mathrm{sem}}(\phi,\psi),
\qquad
0<\sigma<1.
\]
 Accordingly, the similarity measure has a direct interpretation: two arguments are similar when they rely on logically equivalent premises and when their conclusions have logical consequences in common. The parameter $\sigma$ controls the relative weight placed on shared premises
versus shared conclusions.

\citet{David2021} shows that the syntactic--semantic Jaccard similarity satisfies two groups of desirable properties for comparing logical arguments. The first group concerns general consistency: the measure is maximized on identical arguments, symmetric in its arguments, compatible with a triangle-inequality principle, and invariant under substitution of maximally similar arguments. The second group concerns sensitivity to argumentative content: the measure is zero when arguments share no relevant content, positive when they share some content, and strictly increases with greater overlap in premises or in the logical consequences of their conclusions. Appendix~\ref{A: SimNotions} states the full list of properties formally.

The following result is a direct consequence of \citet[Corollary~1]{Amgoud2018}.\\

\begin{theorem}[Logical faithfulness]
\label{T: Metrics}
For any \(0<\sigma<1\) and any arguments
\(a,b\in\operatorname{Arg}(\mathcal L)\),
\[
\operatorname{sim}^{\sigma}(a,b)=1
\quad\text{if and only if}\quad
a\approx b .
\]
\end{theorem}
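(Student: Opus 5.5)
The plan is to reduce the statement to the two Jaccard components separately, using the fact that each lies in $[0,1]$ and that $\sigma\in(0,1)$ is strictly interior. Since $s_{\mathrm{syn}},s_{\mathrm{sem}}\in[0,1]$ (each is a ratio of an intersection's cardinality to a union's, or $1$ by convention), the convex combination $\operatorname{sim}^{\sigma}(a,b)$ equals $1$ if and only if both $s_{\mathrm{syn}}(\Phi,\Psi)=1$ and $s_{\mathrm{sem}}(\phi,\psi)=1$. Here we use that $\sigma$ and $1-\sigma$ are both strictly positive, so a deficit in either term forces the sum below $1$. This first step is elementary and I would do it first.

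Next I would characterize when each Jaccard ratio equals $1$. For finite sets $X,Y$ we have $|X\cap Y|/|X\cup Y|=1$ if and only if $X\cap Y=X\cup Y$, that is, $X=Y$; the empty/empty case is covered by the convention. Applied to the premise classes, $s_{\mathrm{syn}}=1$ iff $[\Phi]_{\equiv}=[\Psi]_{\equiv}$, which is exactly the first clause of $a\approx b$. Finiteness of $\Phi$ gives finiteness of $[\Phi]_{\equiv}$. Applied to conclusions, $s_{\mathrm{sem}}=1$ iff $\operatorname{CN}_{\mathcal F}(\phi)=\operatorname{CN}_{\mathcal F}(\psi)$. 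Finiteness of these sets is asserted in the construction of Appendix~\ref{A:logic_preliminaries}.

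The remaining and main step is to show that $\operatorname{CN}_{\mathcal F}(\phi)=\operatorname{CN}_{\mathcal F}(\psi)$ if and only if $\phi\equiv\psi$. This is precisely the content I would import from \citet[Corollary~1]{Amgoud2018}, but I would sketch it as follows.

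\emph{Direction $(\Leftarrow)$.} If $\phi\equiv\psi$, then both formulas have the same consequences. The obstacle here is that "literals relevant to $\phi$" must also coincide. So one must check that the construction in the appendix defines relevance semantically (for instance, via the atoms occurring in a canonical CNF of the formula, or the atoms on which its truth value depends) rather than syntactically. With a semantic definition, equivalent formulas yield identical sets, where elements are taken up to $\equiv$.

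\emph{Direction $(\Rightarrow)$.} Each formula belongs to its own CNF consequence set up to equivalence: a CNF equivalent of $\phi$ lies in $\operatorname{CN}_{\mathcal F}(\phi)$, as the example $p\land q\in\operatorname{CN}_{\mathcal F}(p\land q)$ illustrates. Hence $\phi$ is equivalent to a member of $\operatorname{CN}_{\mathcal F}(\psi)$, so $\psi\vdash\phi$, and symmetrically $\phi\vdash\psi$. A cleaner alternative is to note that the conjunction of all elements of $\operatorname{CN}_{\mathcal F}(\phi)$ is equivalent to $\phi$.

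Combining the three steps gives $\operatorname{sim}^{\sigma}(a,b)=1$ iff $[\Phi]_{\equiv}=[\Psi]_{\equiv}$ and $\phi\equiv\psi$, that is, iff $a\approx b$.

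I expect the only delicate point to be the well-definedness of $\operatorname{CN}_{\mathcal F}$ under equivalence, namely the notion of relevant literals. There I would rely on the appendix construction and Amgoud et al.'s result rather than reprove it.
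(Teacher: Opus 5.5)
Your proof is correct, but it takes a different route from the paper. The paper gives no argument of its own and simply calls the theorem a direct consequence of Corollary~1 of \citet{Amgoud2018}. You instead prove it from the definitions: (i) $\sigma,1-\sigma>0$ forces both components to equal $1$; (ii) a Jaccard value of $1$ means set equality; and (iii) $\operatorname{CN}_{\mathcal F}(\phi)=\operatorname{CN}_{\mathcal F}(\psi)$ if and only if $\phi\equiv\psi$.

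The appendix settles both points you flagged. First, $\operatorname{DepLit}$ is defined semantically: a formula is independent of $\ell$ if some equivalent formula avoids $\ell$. So equivalent formulas have literally the same consequence set. Second, $\mathcal F$ is stipulated to use representatives written only with dependent literals, so the representative of $[\phi]_{\equiv}$ lies in $\operatorname{CN}_{\mathcal F}(\phi)$.

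One case this does not literally cover is a tautological conclusion, where non-redundancy forces $\Phi=\emptyset$. Its $\operatorname{DepLit}$ is empty, so no CNF formula represents it inside its own consequence set. The entailment you need there is trivial, however, so $(\Rightarrow)$ still goes through.

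You also use the main-text form of $s_{\mathrm{syn}}$, a Jaccard ratio of equivalence classes. If you start from the appendix's $\operatorname{Co}$-based formula instead, you need the appendix lemmas showing that no two premises of an argument are equivalent, which follows from non-redundancy. These give $|\operatorname{Co}(\Phi,\Psi)|=|[\Phi]_{\equiv}\cap[\Psi]_{\equiv}|$ and $|\Phi|=|[\Phi]_{\equiv}|$.

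Your route buys transparency: it isolates exactly which features of the $\operatorname{CN}_{\mathcal F}$ construction carry the result. The paper's citation is shorter, but it leaves implicit that the cited corollary transfers to this paper's particular syntactic--semantic measure.
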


Theorem~\ref{T: Metrics} shows that the similarity measure is faithful to the
underlying logical representation of arguments. Two arguments have maximal similarity if and only if they are logically equivalent: they have the same premises,
up to logical equivalence, and logically equivalent conclusions. Thus,
\(\operatorname{sim}^{\sigma}\) treats argument equivalence as the condition for
maximal similarity, while still assigning graded values to partial overlap in
premises and conclusion consequences.

This property distinguishes the proposed similarity from generic embedding similarities, such as cosine similarity between text embeddings. Indeed, similarity is determined by logical equivalence, not by word choice, topic overlap, or stylistic resemblance. The measure therefore attends to the logical structure of the argument rather than to its  wording.

\subsection{From similarity to logical embeddings}

We now show that the syntactic--semantic Jaccard similarity induces an embedding
of arguments into a Hilbert space. The key step is the following additional property of the similarity measure.\\

\begin{theorem}[Kernel property]
\label{T:Kernel}
For any \(0<\sigma<1\), \(\operatorname{sim}^{\sigma}\) is a positive
semidefinite kernel on \(\operatorname{Arg}(\mathcal L)\).
\end{theorem}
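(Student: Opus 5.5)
The plan is to reduce the claim to the classical fact that the Jaccard (Tanimoto) similarity on finite sets is a positive semidefinite kernel, and then use closure of PSD kernels under nonnegative linear combinations. Since \(\operatorname{sim}^{\sigma}=\sigma s_{\mathrm{syn}}+(1-\sigma)s_{\mathrm{sem}}\) with \(\sigma,1-\sigma>0\), it suffices to show separately that \((a,b)\mapsto s_{\mathrm{syn}}(\Phi,\Psi)\) and \((a,b)\mapsto s_{\mathrm{sem}}(\phi,\psi)\) are PSD on \(\operatorname{Arg}(\mathcal L)\). Each is the pullback of the Jaccard similarity \(J(A,B)=|A\cap B|/|A\cup B|\) (with \(J(\emptyset,\emptyset)=1\)) under a map into finite subsets of a fixed universe. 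For the premises the map is \(a\mapsto[\Phi]_{\equiv}\subseteq\mathcal L/{\equiv}\); for the conclusions it is \(a\mapsto \operatorname{CN}_{\mathcal F}(\phi)\), a finite set of CNF representatives. The conclusion map is well defined only once representatives are chosen canonically, one per equivalence class, so that intersections are taken modulo \(\equiv\). If \(K\) is PSD on a set \(Y\) and \(f:X\to Y\), then \(K(f(\cdot),f(\cdot))\) is PSD on \(X\), so everything hinges on \(J\).

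To prove \(J\) is PSD, I would use the standard min--max route. First, \(|A\cap B|=\sum_{u}1_A(u)1_B(u)\) is an inner product of indicator vectors and hence PSD. Next, for nonempty finite sets write
\[
J(A,B)=\frac{|A\cap B|}{|A|+|B|-|A\cap B|}.
\]
The cleanest argument is probably the MinHash representation. Fix a finite universe \(U\) containing all sets in a given finite Gram matrix, and take a uniformly random total order \(\pi\) on \(U\). Then
\[
J(A,B)=\Pr_\pi\bigl[\min\nolimits_\pi A=\min\nolimits_\pi B\bigr]=\mathbb E_\pi\Bigl[\textstyle\sum_{u\in U}1\{\min\nolimits_\pi A=u\}\,1\{\min\nolimits_\pi B=u\}\Bigr],
\]
which is an expectation of inner products of feature vectors and therefore PSD. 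The empty set is handled by adjoining a dummy element \(\ast\) to every set. This preserves \(J\) between nonempty sets only approximately, so instead I would map \(\emptyset\) to \(\{\ast\}\) alone. That gives \(J(\emptyset,\emptyset)=1\) and \(J(\emptyset,B)=0\) for \(B\neq\emptyset\), matching the stated convention since \(|\emptyset\cap B|=0\). Because PSD-ness is a statement about finite Gram matrices, restricting to the finite union of the sets involved is harmless even though \(\mathcal L/{\equiv}\) is infinite.

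The main obstacle I expect is not the algebra but well-definedness. The kernel must be a function on \(\operatorname{Arg}(\mathcal L)\), and \(\operatorname{CN}_{\mathcal F}(\phi)\) must be a genuine finite set whose pairwise intersections \(\operatorname{CN}_{\mathcal F}(\phi)\cap\operatorname{CN}_{\mathcal F}(\psi)\) count common consequences correctly, which needs a single global choice of CNF representative per class rather than a per-formula choice. I would handle this by working with equivalence classes throughout, treating \(\operatorname{CN}_{\mathcal F}(\phi)\) as a subset of \(\mathcal L/{\equiv}\), and relying on the appendix construction for finiteness. With that in place, summing \(\sigma\) times one PSD Gram matrix and \((1-\sigma)\) times the other yields a PSD Gram matrix for any finite family of arguments, which proves Theorem~\ref{T:Kernel}.
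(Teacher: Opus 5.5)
Your proof is correct and has the same skeleton as the paper's. In both, $\operatorname{sim}^{\sigma}$ is $\sigma$ times the pullback of the empty-set-extended Tanimoto kernel along the fixed map $a\mapsto[\Phi]_{\equiv}$, plus $(1-\sigma)$ times its pullback along the fixed map $a\mapsto\operatorname{CN}_{\mathcal F}(\phi)$, and positive semidefinite kernels are closed under pullbacks and nonnegative sums. The differences lie in three sub-steps.

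First, the paper cites Passerini for positive semidefiniteness of the Tanimoto kernel. Your MinHash identity proves it directly, by writing each Gram matrix as an average over orderings of Gram matrices of one-hot vectors, so your argument is self-contained.

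Second, the paper handles the empty-set convention with a block-diagonal decomposition: an all-ones block, zero cross blocks, and a Tanimoto block. Your remapping $\emptyset\mapsto\{\ast\}$ makes the extended kernel a pullback of the nonempty-set kernel. That is shorter and rests on the same fact, $J(\emptyset,B)=0$.

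Third, the paper starts from the appendix formula $|\operatorname{Co}(\Phi,\Psi)|/(|\Phi|+|\Psi|-|\operatorname{Co}(\Phi,\Psi)|)$. To recognize it as a Jaccard coefficient, it needs the auxiliary lemmas: non-redundancy forces distinct premises of an argument to be inequivalent, so $|\operatorname{Co}(\Phi,\Psi)|=|[\Phi]_{\equiv}\cap[\Psi]_{\equiv}|$ and $|\Phi|=|[\Phi]_{\equiv}|$. You bypass this by using the main-text definition, which is legitimate. Starting from the appendix formula, however, the step is indispensable. For instance, the non-redundancy-violating sets $\Phi=\{p,p\land p\}$ and $\Psi=\{p\}$ would give the value $2$ in one order and $1/2$ in the other.
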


By the Moore--Aronszajn theorem \citep{Aronszajn1950}, the similarity measure \(\operatorname{sim}^{\sigma}\) determines a
reproducing kernel Hilbert space \(\mathcal H_{\sigma}\) and a canonical feature
map
\[
\Theta_{\sigma}:\operatorname{Arg}(\mathcal L)\to\mathcal H_{\sigma}
\]
such that, for all \(a,b\in\operatorname{Arg}(\mathcal L)\),
\[
\left\langle
\Theta_{\sigma}(a),\Theta_{\sigma}(b)
\right\rangle_{\mathcal H_{\sigma}}
=
\operatorname{sim}^{\sigma}(a,b).
\]
We call \(\Theta_{\sigma}(a)\) the logical embedding of the argument \(a\).
Appendix~\ref{A:rkhs} gives a brief review of reproducing kernel Hilbert spaces.

This construction gives the embedding a direct interpretation. The vector \(\Theta_{\sigma}(a)\) encodes the logical content of \(a\): two arguments receive the same embedding exactly when they are equivalent under the premise--conclusion representation. This encoding is relational. The position of \(a\) in the Hilbert space is determined by how \(a\) relates to every other possible argument through the similarity function \(\operatorname{sim}^{\sigma}(a,\cdot)\). Thus, the geometry of the embedding is governed by the premise-conclusion structure of arguments.

Moreover, the RKHS embedding admits a finite-sample approximation. Given a corpus of arguments \(a_1,\ldots,a_n\), the formal construction defines its Gram matrix, that is, the matrix of pairwise similarities
\[
K^\sigma_{ij}
=
\operatorname{sim}^{\sigma}(a_i,a_j)
=
\left\langle
\Theta_{\sigma}(a_i),\Theta_{\sigma}(a_j)
\right\rangle_{\mathcal H_{\sigma}}.
\]
In practice, we estimate this matrix from natural-language text by approximating the premise--conclusion structure of each argument and the corresponding syntactic--semantic similarity. Let \(\widehat K^\sigma\) denote the resulting empirical kernel matrix. We then apply kernel PCA to \(\widehat K^\sigma\) to obtain finite-dimensional coordinates for the observed arguments.\footnote{Appendix~\ref{A:kernel_pca} reviews the theoretical basis of this procedure.} These coordinates provide a low-dimensional approximation to the ideal RKHS embeddings on the observed corpus.

Finally, the logical embedding preserves all information relevant to the formal argument representation, up to logical equivalence. In other words, arguments that are not equivalent under \(\approx\) receive distinct embeddings, whereas equivalent arguments receive the same embedding. The next theorem formalizes this property.\\

\begin{theorem}[No loss of logical information]
\label{T: CKernel}
For any $0<\sigma<1$, let $\mathcal H_{\sigma}$ be the RKHS induced by
$\operatorname{sim}^{\sigma}$, and let
\[
\Theta_{\sigma}:
\operatorname{Arg}(\mathcal L)
\to
\mathcal H_{\sigma},
\qquad
\Theta_{\sigma}(a)=\operatorname{sim}^{\sigma}(\cdot,a)
\]
be the canonical feature map. Then, for any
$a,b\in\operatorname{Arg}(\mathcal L)$,
\[
\Theta_{\sigma}(a)=\Theta_{\sigma}(b)
\quad\text{if and only if}\quad
a\approx b.
\]
Consequently, $\Theta_{\sigma}$ induces an injective map
\[
\overline{\Theta}_{\sigma}:
\operatorname{Arg}(\mathcal L)/{\approx}
\to
\mathcal H_{\sigma},
\qquad
\overline{\Theta}_{\sigma}([a]_{\approx})
=
\Theta_{\sigma}(a).
\]
\end{theorem}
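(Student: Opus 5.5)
The plan is to prove the equivalence directly from the reproducing property of $\mathcal H_\sigma$ together with Theorem~\ref{T: Metrics}, rather than through a general characteristic-kernel argument. The key identity is
\[
\|\Theta_\sigma(a)-\Theta_\sigma(b)\|_{\mathcal H_\sigma}^2
= \operatorname{sim}^\sigma(a,a)+\operatorname{sim}^\sigma(b,b)-2\operatorname{sim}^\sigma(a,b),
\]
which holds because $\langle\Theta_\sigma(a),\Theta_\sigma(b)\rangle_{\mathcal H_\sigma}=\operatorname{sim}^\sigma(a,b)$. Theorem~\ref{T:Kernel} guarantees that this RKHS exists.

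First I will check that $\operatorname{sim}^\sigma(a,a)=1$ for every argument $a$. This is immediate, because both Jaccard ratios equal $1$ when a set is compared with itself; for empty sets it follows from the stated convention. Alternatively, it follows from Theorem~\ref{T: Metrics}, since $a\approx a$. The displayed identity then becomes
\[
\|\Theta_\sigma(a)-\Theta_\sigma(b)\|^2=2\bigl(1-\operatorname{sim}^\sigma(a,b)\bigr).
\]
Hence $\Theta_\sigma(a)=\Theta_\sigma(b)$ holds if and only if $\operatorname{sim}^\sigma(a,b)=1$. By Theorem~\ref{T: Metrics}, this holds if and only if $a\approx b$.

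This gives both directions of the equivalence at once. As a sanity check on the forward direction, I can argue pointwise instead. If $\Theta_\sigma(a)=\Theta_\sigma(b)$ as functions, evaluating at $b$ gives $\operatorname{sim}^\sigma(b,a)=\operatorname{sim}^\sigma(b,b)=1$. For the reverse direction pointwise, I would need $\operatorname{sim}^\sigma(c,a)=\operatorname{sim}^\sigma(c,b)$ for all $c$ whenever $a\approx b$. This holds because $s_{\mathrm{syn}}$ depends on the premises only through $[\Phi]_\equiv$. It also requires that $s_{\mathrm{sem}}$ depend on the conclusion only through its $\equiv$-class, which should follow from $\operatorname{CN}_{\mathcal F}(\phi)=\operatorname{CN}_{\mathcal F}(\psi)$ when $\phi\equiv\psi$. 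The norm identity avoids needing this separately.

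For the consequence, I will define $\overline{\Theta}_\sigma([a]_\approx)=\Theta_\sigma(a)$. The ``if'' direction of the equivalence shows that this is well defined, and the ``only if'' direction shows that it is injective. Before doing so I need $\approx$ to be an equivalence relation, which is clear because it is defined by equalities of classes under the equivalence $\equiv$.

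The main obstacle is making sure Theorem~\ref{T: Metrics} applies exactly as stated. In particular, $\operatorname{sim}^\sigma=1$ with $0<\sigma<1$ forces both Jaccard terms to equal $1$, since each is at most $1$. That in turn requires that equality of the sets $\operatorname{CN}_{\mathcal F}$ implies $\phi\equiv\psi$, which relies on the relevant-literal convention in the construction of $\operatorname{CN}_{\mathcal F}$ in the appendix. Since Theorem~\ref{T: Metrics} is assumed, I will take it as given and only note this dependence.
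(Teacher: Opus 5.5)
Your proposal is correct and matches the paper's proof essentially step for step: you expand $\|\Theta_{\sigma}(a)-\Theta_{\sigma}(b)\|_{\mathcal H_{\sigma}}^2$ via the reproducing property, use $\operatorname{sim}^{\sigma}(a,a)=\operatorname{sim}^{\sigma}(b,b)=1$ to reduce $\Theta_{\sigma}(a)=\Theta_{\sigma}(b)$ to $\operatorname{sim}^{\sigma}(a,b)=1$, apply Theorem~\ref{T: Metrics} in both directions, and read off that $\overline{\Theta}_{\sigma}$ is well defined and injective from the two halves of the equivalence. Your extra remarks (the pointwise check, and the observation that Theorem~\ref{T: Metrics} relies on $\operatorname{CN}_{\mathcal F}$ determining the $\equiv$-class of the conclusion) are accurate but not needed for the argument.
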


\section{NLP implementation of logical embeddings}\label{S:nlp_implementation}

While logical embeddings and their supporting similarity measures have desirable theoretical properties, their utility is limited without implementation to real data. This section discusses how to compute the logic-based similarity measures from Section \ref{S: ConstructingSim} using existing NLP and Machine Learning tools.\\

In essence, we propose a three-step procedure to computing Syntactic-Semantic Jaccard similarity measures:

\begin{enumerate}
    \item Training an entailment model.
    \item Using a large language model (LLM) to decompose an argument into its constituent premises and conclusion.
    \item Using an LLM to map a conclusion into its CNF.
\end{enumerate}

We further detail each step in detail in the upcoming sections.

\subsection{Training an entailment model}

The key to establishing logical overlap is determining whether two propositions, $p$ and $q$, are logically equivalent. This task can be further decomposed into establishing that $p$ entails $q$ and that $q$ entails $p$. Thus, a model that can determine whether the first proposition in an ordered pair logically implies the second can also predict logical equivalence.\\

Establishing logical equivalence is a well-defined NLP task known as Recognizing Textual Entailment (RTE), first introduced by \citet{maccartney-manning-2008-modeling}. This task has received significant attention and benefits from large training datasets such as the Stanford Natural Language Inference (SNLI) dataset, which contains 570,000 human-written, manually labeled English sentence pairs. Moreover, existing Machine Learning tools have proven highly effective at this task, achieving accuracies exceeding $0.9$ in several studies \citep{wang2021entailmentfewshotlearner,pilault2022conditionallyadaptivemultitasklearning,zhang2020semanticsawarebertlanguageunderstanding}. Consequently, logical entailment classification can be satisfactorily accomplished via zero-shot learning or fine-tuning of these models.

\subsection{Argument decomposition}

The rest of the paper treats this construction as an ideal target. In practice,
natural-language arguments do not come with explicit premise sets, formal
conclusions, or canonical consequence sets. Our empirical task is therefore to
approximate the ideal logical embedding from text and to evaluate whether the
resulting representations are useful for downstream argument-related tasks.

Premise and conclusion extraction is a well-studied task in argumentation NLP settings \citep{Palau2011,peldszus-stede-2015-joint,Stab2017}. However, existing approaches often rely on sophisticated models that can now be superseded by large language models (LLMs). For example, we have found good results using a chained prompt as follows:\\

\boxed{
\begin{minipage}{\dimexpr\columnwidth-2\fboxsep-2\fboxrule\relax}
\underline{Phase 1:}\\

You are a skilled logician. Given a text, extract its conclusion. A conclusion is a logical result of the relationship between the premises. Conclusions serve as the thesis of the argument. \\

\textit{Inputs:} Text \\

\underline{Phase 2:}\\

You are a skilled logician. Given a text and its conclusion, extract its premises. \\

\textit{Inputs:} Text + Conclusion from Phase 1.
\end{minipage}
}

\subsection{Conclusion CNF formulation}

The last step consists of mapping a conclusion into its CNF representation. This is the hardest subtask, but can also be achieved with high precision by modern LLMs. In our applications we have used the following prompt with success.\\

\boxed{
\begin{minipage}{\dimexpr\columnwidth-2\fboxsep-2\fboxrule\relax}

Convert the following text into its Conjunctive Normal Form (CNF). Identify the individual propositions and combine them into disjunctions of complete and meaningful propositions. Ensure each proposition contains at least a noun and a verb and is unique. \\

\textit{Inputs:} Conclusion \\

\end{minipage}
}

\subsection{Putting everything together}

Given two arguments, $a_1$ and $a_2$, we can now compute their logic-based similarity using the previously explored tools. The process unfolds as follows:

\begin{itemize}
    \item Decompose $a_1$ and $a_2$ into their constituent premises and conclusions, $(\Phi_1, \phi_1)$ and $(\Phi_2, \phi_2)$.
    \item Find the CNF representation of $\phi_1$ and $\phi_2$, and calculate $\text{CN}_{\mathcal{F}}(\phi_1)$ and $\text{CN}_{\mathcal{F}}(\phi_2)$.
    \item Using the entailment model, calculate $s_{\text{syn}}(\Phi_1, \Phi_2)$ and $s_{\text{sem}} (\phi_1, \phi_2)$ by assessing the overlap, modulo logical equivalence, between $\Phi_1$ and $\Phi_2$, and $\text{CN}_{\mathcal{F}}(\phi_1)$ and $\text{CN}_{\mathcal{F}}(\phi_2)$.
    \item Fix a $\sigma \in [0,1]$ and aggregate into $\text{sim}^{\sigma}(a_1, a_2)$.
\end{itemize}

Logical embeddings can then be produced and exploited via the reproducibility property of the similarity measure. In the Appendix, Figure \ref{fig:exhibit_gram_77_vs_4612} we provide an example of how similarity is constructed for a pair of arguments.

\subsection{Performance evaluation}

We evaluate the performance of our embeddings on the standard IBM-ArgQ-6.3kArgs dataset \citep{toledo2019}. This dataset contains arguments in favor or against several topics. To economize on implementation costs we focus on the topics of doping, vaccins, and cryptocurrencies . For each embedding-classifier pair, we use a fixed 70/30 train-test split and report binary F1 on the test set. The neural-network classifier uses five-fold cross-validation within the training set for hyperparameter selection, while the other classifiers are fit directly on the training split. We report test set F1-scores for different embedding methods as rows and different models as columns. We compare our logical embeddings in the first row against several pretrained alternatives such as GloVe, BERT, SBERT and OpenAI text-embedding-3-small embeddings. We also evaluate concatenated representations that combine each pretrained embedding with the logical embedding. Classifiers include L1-penalized logistic regression, Lasso and Ridge linear models with thresholded predictions, a one-hidden-layer feed-forward neural network, and a random forest. The neural network has a ReLU hidden layer and sigmoid output; its learning rate, weight decay, and hidden dimension are selected by five-fold cross-validation over the training set. The random forest uses 200 trees. Table ~\ref{TABLE:F1 results} shows that the Logical Embeddings outperform every model when compared side by side. In combination, Open AI text-embedding-3-small and logical embeddings outperform all other combinations. 

\section{Conclusion}

In this paper we introduced and tested \textit{logical embeddings}. We show that these are optimal embeddings and provide implementation details. Implementation code is available at \nolinkurl{https://github.com/lheldring/logical_embeddings}.

\begin{table}[h!]
\footnotesize
\centering
\captionsetup{justification=centering}
\caption{\textsc{F1 scores 100-dimensional logical embeddings}}\label{TABLE:F1 results}
\resizebox{\columnwidth}{!}{%
\begin{tabular}{lccccc}
\toprule\toprule
\emph{Estimation:} & \multicolumn{5}{c}{F1 Scores}\\\cmidrule(lr){2-6}
& \multicolumn{1}{c}{Logistic} & \multicolumn{1}{c}{Lasso} & \multicolumn{1}{c}{Ridge} &  \multicolumn{1}{c}{Neural Network} & \multicolumn{1}{c}{Random Forest} \\
\midrule\\
\input{Figures/table_f1_comparison_pairwise}
\end{tabular}
}
\captionsetup{font={scriptsize}, width={\columnwidth}, justification=justified}
\caption*{\textit{Notes}: \mainresults }
\end{table}

\newpage
\clearpage

\bibliographystyle{plainnat}
\bibliography{FinalVersion.bib}


\appendix

\section{Appendix / supplemental material}\label{paper_appendix}

\subsection{Logical preliminaries}
\label{A:logic_preliminaries}

This appendix gives the formal logical background used in
Section~\ref{S:logical_embeddings}. The main text introduces only the notation
needed to define arguments, logical equivalence, and the syntactic--semantic
similarity. Here we state the underlying propositional language, consequence
relation, CNF convention, and canonical consequence sets more explicitly. The
presentation follows \citet{amgoud2021compilation}, \citet{David2021}, and
\citet{simpson2013mathematical}.

\subsubsection{Sentential logic}

Sentential logic starts from atomic propositions and builds more complex
statements from them using Boolean connectives.

\begin{definition}[Atoms and sentential language]
An atomic proposition, or atom, is a statement that can be assigned a truth
value. A sentential language is a finite set
\[
L=\{p,q,r,s,\ldots\}
\]
of atoms.
\end{definition}

Atoms are the basic units of the language. For example, an atom \(p\) may stand
for a truth-valued statement such as ``the policy reduces debt.'' The language
\(L\) specifies which atoms are available in the application.

\begin{definition}[Propositional connectives]
The propositional connectives are
\[
\neg,\quad \land,\quad \lor,\quad \rightarrow,\quad \leftrightarrow,
\]
denoting negation, conjunction, disjunction, implication, and biconditionality,
respectively.
\end{definition}

Connectives combine atoms into more complex truth-valued expressions. These
expressions are called formulas.

\begin{definition}[Formulas]
The set of well-formed formulas generated by \(L\), denoted \(\mathcal L\), is
the smallest set satisfying the following conditions:
\begin{enumerate}
    \item if \(p\in L\), then \(p\in\mathcal L\);
    \item the truth constants \(\top\) and \(\bot\) belong to \(\mathcal L\);
    \item if \(\phi\in\mathcal L\), then \(\neg\phi\in \mathcal L\);
    \item if \(\phi,\psi\in \mathcal L\), then
    \[
    (\phi\land\psi),\quad
    (\phi\lor\psi),\quad
    (\phi\rightarrow\psi),\quad
    (\phi\leftrightarrow\psi)
    \]
    are in \(\mathcal L\).
\end{enumerate}
\end{definition}

Thus, atoms are formulas, and formulas can be combined recursively to form more
complex formulas. For instance, if \(p,q\in L\), then \(p\land q\),
\(p\rightarrow q\), and \(\neg p\lor q\) are formulas. A literal is an atom or
the negation of an atom.

Next, we define truth assignments. A truth assignment determines which formulas
are true and which are false, subject to the usual truth tables for the Boolean
connectives. Fix the set of truth values \(\{F,T\}\), where \(F\) denotes
falsity and \(T\) denotes truth.

\begin{definition}[Valuation]
A valuation is a map
\[
v:\mathcal L\to\{F,T\}
\]
that satisfies the usual truth-functional rules. In particular,
\[
v(\top)=T
\qquad\text{and}\qquad
v(\bot)=F,
\]
\[
v(\neg\phi)=T \quad\text{if and only if}\quad v(\phi)=F,
\]
\[
v(\phi\land\psi)=T \quad\text{if and only if}\quad
v(\phi)=T \text{ and } v(\psi)=T,
\]
\[
v(\phi\lor\psi)=T \quad\text{if and only if}\quad
v(\phi)=T \text{ or } v(\psi)=T,
\]
\[
v(\phi\rightarrow\psi)=T \quad\text{if and only if}\quad
v(\phi)=F \text{ or } v(\psi)=T,
\]
and
\[
v(\phi\leftrightarrow\psi)=T \quad\text{if and only if}\quad
v(\phi)=v(\psi).
\]
\end{definition}

The preceding definitions specify the syntax of the language and the truth
conditions for its formulas. We now package these objects into the propositional
logic used throughout the paper.

\begin{definition}[Sentential logic]
A sentential, or propositional, logic is a pair \((\mathcal L,\vdash)\), where
\(\mathcal L\) is the set of well-formed formulas generated by a sentential
language \(L\), and \(\vdash\) is the semantic consequence relation. For
\(\Gamma\subseteq\mathcal L\) and \(\varphi\in\mathcal L\), we write
\[
\Gamma\vdash\varphi
\]
if and only if every valuation that assigns \(T\) to all formulas in \(\Gamma\)
also assigns \(T\) to \(\varphi\).
\end{definition}

Fixing a logic \((\mathcal L,\vdash)\) lets us define when two formulas have the
same logical content.

\begin{definition}[Logical equivalence of formulas]
Two formulas \(\phi,\psi\in\mathcal L\) are logically equivalent if
\[
\phi\vdash\psi
\qquad\text{and}\qquad
\psi\vdash\phi,
\]
where \(\phi\vdash\psi\) abbreviates \(\{\phi\}\vdash\psi\). In this case, we
write \(\phi\equiv\psi\).
\end{definition}

Logical equivalence means that two formulas entail each other. Equivalently, they have the same truth conditions.

Moroever, logical equivalence is an equivalence relation: it is reflexive, symmetric, and
transitive. We can therefore group formulas into equivalence classes.

\begin{definition}[Equivalence class of a formula]
For a formula \(\phi\in\mathcal L\), its equivalence class under logical
equivalence is
\[
[\phi]_{\equiv}
=
\{\psi\in\mathcal L:\psi\equiv\phi\}.
\]
\end{definition}

Equivalence classes let us identify formulas that differ syntactically but have
the same logical content. To choose representatives of these classes, we use
conjunctive normal form.

\begin{definition}[Conjunctive normal form]
A literal is an atom \(p\in L\) or its negation \(\neg p\). A clause is a
finite disjunction of literals. A formula \(\phi\in\mathcal L\) is in
conjunctive normal form (CNF) if it can be written as
\[
\phi=\bigwedge_{i=1}^{m} c_i,
\]
where each clause \(c_i\) has the form
\[
c_i=\bigvee_{j=1}^{n_i}\ell_{ij},
\]
and each \(\ell_{ij}\) is a literal. Equivalently, a CNF formula is a
conjunction of clauses, each of which is a disjunction of literals.
\end{definition}

Every propositional formula is logically equivalent to some CNF formula
\citep{russell1995modern}. For example,
\[
p\to(q\land r)
\equiv
(\neg p\lor q)\land(\neg p\lor r).
\]

Following \citet{amgoud2021compilation}, we use CNF representatives to
avoid redundant reformulations. The goal is to keep one representative formula
for each relevant equivalence class while preserving the literals of the
original formula.

\begin{definition}[Finite CNF language]
Let \(\mathcal F\subseteq\mathcal L\). We say that \(\mathcal F\) is a finite
CNF language for \(\mathcal L\) if, for every \(\phi\in\mathcal L\), there
exists a unique \(\psi\in\mathcal F\) such that
\begin{enumerate}
    \item \(\psi\in[\phi]_{\equiv}\);
    \item \(\operatorname{Lit}(\psi)=\operatorname{Lit}(\phi)\);
    \item \(\psi\) is in CNF.
\end{enumerate}
\end{definition}

Thus, \(\mathcal F\) contains a unique CNF representative for each formula, up to
logical equivalence. We write
\(\operatorname{CNF}_{\mathcal F}(\phi)\) for the unique element of
\(\mathcal F\) that is logically equivalent to \(\phi\) and has the same literals
as \(\phi\).

 \subsection{Logical arguments}

We now define the logical arguments used in the main text, following
\citet{Besnard2001} and \citet{Amgoud2018}. A logical argument consists of a
set of premises and a conclusion that follows from those premises. Before giving
the definition, we record two auxiliary notions for sets of formulas.

\begin{definition}[Consistency]
A set of formulas \(\Phi\subseteq\mathcal L\) is consistent if
\[
\Phi\nvdash\bot.
\]
It is inconsistent otherwise.
\end{definition}

Equivalently, \(\Phi\) is consistent if there exists a valuation under which all
formulas in \(\Phi\) are true.

We also need to compare premise sets up to logical equivalence.

\begin{definition}[Logical equivalence of sets of formulas]
For \(\Phi,\Psi\subseteq\mathcal L\), write
\[
[\Phi]_{\equiv}=\{[\phi]_{\equiv}:\phi\in\Phi\}.
\]
We say that \(\Phi\) and \(\Psi\) are logically equivalent, written
\(\Phi\cong\Psi\), if
\[
[\Phi]_{\equiv}=[\Psi]_{\equiv}.
\]
\end{definition}

We now define the logical arguments used in the main text, following
\citet{Besnard2001} and \citet{Amgoud2018}. A logical argument consists of a
finite set of premises and a conclusion that follows from those premises. The
premise set is required to be consistent and non-redundant.

\begin{definition}[Logical argument]
\label{Def:argument}
An argument in the logic \((\mathcal L,\vdash)\) is a pair
\[
a=(\Phi,\phi),
\]
where \(\Phi\subseteq\mathcal L\) is finite, \(\phi\in\mathcal L\), and:
\begin{enumerate}
    \item \(\Phi\) is consistent;
    \item \(\Phi\vdash\phi\);
    \item \(\Phi\) is non-redundant, meaning that there is no
    \(\Phi'\subsetneq\Phi\) such that \(\Phi'\vdash\phi\).
\end{enumerate}
\end{definition}

The set \(\Phi\) is the set of premises of \(a\), and \(\phi\) is its
conclusion. We write
\[
\operatorname{Prem}(a)=\Phi
\qquad\text{and}\qquad
\operatorname{Conc}(a)=\phi.
\]
The set of all arguments over \(\mathcal L\) is denoted
\(\operatorname{Arg}(\mathcal L)\).\\

This representation lets us define when two arguments have the same logical
content. The definition compares premise sets up to logical equivalence and
compares conclusions by logical equivalence.

\begin{definition}[Argument equivalence]
\label{D:ArgEquivalence}
Two arguments \(a,b\in\operatorname{Arg}(\mathcal L)\) are equivalent, written
\(a\approx b\), if
\[
\operatorname{Prem}(a)\cong \operatorname{Prem}(b)
\qquad\text{and}\qquad
\operatorname{Conc}(a)\equiv \operatorname{Conc}(b).
\]
\end{definition}

 \subsection{Measuring similarity}\label{S: MeasuringSim}

The logical representation of arguments makes it possible to compare their
constituent parts separately. This is the basis of the similarity measures
developed by \citet{Amgoud2018} and \citet{David2021}. These measures compare
arguments by comparing their premises and their conclusions, and they satisfy
the rationality properties stated in Appendix~\ref{A: SimNotions}.

We first need an operator for comparing sets of formulas modulo logical
equivalence.

\begin{definition}[Common formulas operator]
For sets of formulas \(\Phi,\Psi\subseteq\mathcal L\), define
\[
\operatorname{Co}(\Phi,\Psi)
=
\{\phi\in\Phi:\text{ there exists }\psi\in\Psi\text{ such that }\phi\equiv\psi\}.
\]
\end{definition}

The operator \(\operatorname{Co}(\Phi,\Psi)\) selects the formulas in \(\Phi\)
that also occur in \(\Psi\), up to logical equivalence. Thus,
\(\Phi\cong\Psi\) if and only if
\[
\operatorname{Co}(\Phi,\Psi)=\Phi
\qquad\text{and}\qquad
\operatorname{Co}(\Psi,\Phi)=\Psi.
\]

We next define the consequence
sets used to compare conclusions.

\begin{definition}[Logical consequences of a formula]
For a formula \(\phi\in\mathcal L\), define
\[
\operatorname{CN}(\phi)
=
\{\psi\in\mathcal L:\phi\vdash\psi\}.
\]
This is the set of all logical consequences of \(\phi\).
\end{definition}

The set \(\operatorname{CN}(\phi)\) is typically infinite. For example,
\(p\vdash p\), \(p\vdash p\land p\), \(p\vdash (p\land p)\land p\), and so on.
Moreover, many such consequences are redundant up to logical equivalence. To
obtain a finite set for comparison, we restrict attention to CNF representatives
that use only literals on which \(\phi\) depends.

Let \(L^{\pm}\) denote the set
of literals, that is, atoms and their negations. For a formula \(\phi\), let
\(\operatorname{Lit}(\phi)\) denote the set of literals occurring in \(\phi\)
after rewriting \(\phi\) in negation normal form.

A formula \(\phi\) is independent of a literal \(\ell\in L^{\pm}\) if there
exists a formula \(\chi\in\mathcal L\) such that \(\chi\equiv\phi\) and
\(\ell\notin\operatorname{Lit}(\chi)\). Otherwise, \(\phi\) depends on
\(\ell\). We write \(\operatorname{DepLit}(\phi)\) for the set of literals on
which \(\phi\) depends.

Fix a finite CNF representative language \(\mathcal F\subseteq\mathcal L\).
That is, \(\mathcal F\) contains one CNF representative from each logical
equivalence class, and each representative is written with no literals beyond
those on which it depends.

\begin{definition}[Finite CNF consequence set]
For a formula \(\phi\in\mathcal L\), define
\[
\operatorname{CN}_{\mathcal F}(\phi)
=
\{\psi\in\mathcal F:
\phi\vdash\psi
\text{ and }
\operatorname{Lit}(\psi)\subseteq \operatorname{DepLit}(\phi)\}.
\]
\end{definition}

Thus, \(\operatorname{CN}_{\mathcal F}(\phi)\) contains the CNF representatives
of the logical consequences of \(\phi\) that use only literals on which
\(\phi\) depends. The restriction to \(\mathcal F\) removes equivalent
reformulations; the restriction to \(\operatorname{DepLit}(\phi)\) removes
consequences involving irrelevant literals. Since the atom set is finite,
\(\operatorname{CN}_{\mathcal F}(\phi)\) is finite.

For example,
\[
\operatorname{CN}_{\mathcal F}(p)=\{p\},
\qquad
\operatorname{CN}_{\mathcal F}(p\lor q)=\{p\lor q\},
\]
and
\[
\operatorname{CN}_{\mathcal F}(p\land q)
=
\{p,q,p\lor q,p\land q\}.
\]
The last equality holds because \(p\land q\) entails \(p\), entails \(q\),
entails \(p\lor q\), and entails itself. These are the distinct representative
consequences over the literals on which \(p\land q\) depends.

Having introduced logical arguments, we can now define similarity measures over them. The logical representation lets us compare arguments along two dimensions: their premises and their conclusions.

\begin{definition}[Similarity measure]
A similarity measure on a sentential logic \((\mathcal L,\vdash)\) is a map
\[
\mathcal S:\operatorname{Arg}(\mathcal L)\times
\operatorname{Arg}(\mathcal L)\to[0,1].
\]
\end{definition}

\citet{David2021} studies several similarity measures for logical arguments and their properties. The central idea is that arguments can be compared syntactically, by the overlap in their premises, and semantically, by the overlap in the logical consequences of their conclusions. Measures based on only one of these dimensions fail to satisfy all the desired properties. The syntactic--semantic Jaccard measure combines both dimensions.

\begin{definition}[Syntactic--semantic Jaccard similarity]
Let \(a=(\Phi,\phi)\) and \(b=(\Psi,\psi)\) be arguments in
\(\operatorname{Arg}(\mathcal L)\). For \(0<\sigma<1\), define
\[
\operatorname{sim}^{\sigma}(a,b)
=
\sigma s_{\mathrm{syn}}(\Phi,\Psi)
+
(1-\sigma)s_{\mathrm{sem}}(\phi,\psi),
\]
where the premise similarity is
\[
s_{\mathrm{syn}}(\Phi,\Psi)
=
\begin{cases}
\dfrac{
|\operatorname{Co}(\Phi,\Psi)|
}{
|\Phi|+|\Psi|-|\operatorname{Co}(\Phi,\Psi)|
}
& \text{if } \Phi\neq\emptyset \text{ and } \Psi\neq\emptyset,\\[2mm]
1 & \text{if } \Phi=\Psi=\emptyset,\\
0 & \text{otherwise,}
\end{cases}
\]
and the conclusion similarity is
\[
s_{\mathrm{sem}}(\phi,\psi)
=
\frac{
|\operatorname{CN}_{\mathcal F}(\phi)
\cap
\operatorname{CN}_{\mathcal F}(\psi)|
}{
|\operatorname{CN}_{\mathcal F}(\phi)
\cup
\operatorname{CN}_{\mathcal F}(\psi)|
}.
\]
\end{definition}

The measure is a convex combination of premise similarity and conclusion similarity. The term \(s_{\mathrm{syn}}\) measures how many premises the two arguments share, modulo logical equivalence. The term \(s_{\mathrm{sem}}\) measures how many canonical consequences their conclusions share. The parameter \(\sigma\) controls the relative weight placed on premises and conclusions.

Importantly, the value of \(s_{\mathrm{sem}}\) is invariant to the particular choice of
\(\mathcal F\), as long as \(\mathcal F\) is a valid finite CNF representative
language. Different valid choices merely select different formulas from the same
logical-equivalence classes. They therefore relabel the elements of
\(\operatorname{CN}_{\mathcal F}(\phi)\) without changing the cardinalities of
the intersections and unions defining the quantity.

\subsection{Example of computation}

We illustrate the computation with two simple arguments.

\begin{quote}
\textbf{Argument 1.} John is Susan's brother. Susan is shorter than all her
brothers. Susan is the only girl in the household. Therefore, Susan is the
shortest sibling.
\end{quote}

\begin{quote}
\textbf{Argument 2.} Susan is John's sister. Susan is shorter than all her
brothers. No boy in the household has blond hair. Susan is the only girl in the
household. There is at least one sibling with blond hair. Therefore, Susan is
the shortest sibling and is the only one with blond hair.
\end{quote}

Let \(a_1=(\Phi_1,\phi_1)\) and \(a_2=(\Phi_2,\phi_2)\) denote the corresponding
logical arguments. We encode the premises as follows:

\[
\begin{array}{lll}
p_1 & = & \text{``John is Susan's brother,''} \\
p_2 & = & \text{``Susan is shorter than all her brothers,''} \\
p_3 & = & \text{``Susan is the only girl in the household,''}
\end{array}
\qquad
\Phi_1=\{p_1,p_2,p_3\}.
\]

For the second argument, write

\[
\begin{array}{lll}
q_1 & = & \text{``Susan is John's sister,''} \\
q_2 & = & \text{``Susan is shorter than all her brothers,''} \\
q_3 & = & \text{``No boy in the household has blond hair,''} \\
q_4 & = & \text{``Susan is the only girl in the household,''} \\
q_5 & = & \text{``There is at least one sibling with blond hair,''}
\end{array}
\qquad
\Phi_2=\{q_1,q_2,q_3,q_4,q_5\}.
\]

The shared premises, modulo logical equivalence, are

\[
p_1\equiv q_1,
\qquad
p_2=q_2,
\qquad
p_3=q_4.
\]

Thus,

\[
\operatorname{Co}(\Phi_1,\Phi_2)=\{p_1,p_2,p_3\}.
\]

The syntactic similarity is therefore

\[
s_{\mathrm{syn}}(\Phi_1,\Phi_2)
=
\frac{
|\operatorname{Co}(\Phi_1,\Phi_2)|
}{
|\Phi_1|+|\Phi_2|-|\operatorname{Co}(\Phi_1,\Phi_2)|
}
=
\frac{3}{3+5-3}
=
\frac{3}{5}.
\]

Now let

\[
r_1=\text{``Susan is the shortest sibling''}
\qquad\text{and}\qquad
r_2=\text{``Susan is the only one with blond hair.''}
\]

Then

\[
\phi_1=r_1
\qquad\text{and}\qquad
\phi_2=r_1\land r_2.
\]

Under the fixed CNF representative convention,

\[
\operatorname{CN}_{\mathcal F}(\phi_1)=\{r_1\},
\]

whereas

\[
\operatorname{CN}_{\mathcal F}(\phi_2)
=
\{r_1,r_2,r_1\lor r_2,r_1\land r_2\}.
\]

The two conclusion consequence sets share only \(r_1\). Hence,

\[
s_{\mathrm{sem}}(\phi_1,\phi_2)
=
\frac{
|\operatorname{CN}_{\mathcal F}(\phi_1)
\cap
\operatorname{CN}_{\mathcal F}(\phi_2)|
}{
|\operatorname{CN}_{\mathcal F}(\phi_1)
\cup
\operatorname{CN}_{\mathcal F}(\phi_2)|
}
=
\frac{1}{4}.
\]

Combining the two terms gives, for \(0<\sigma<1\),

\[
\operatorname{sim}^{\sigma}(a_1,a_2)
=
\sigma\left(\frac{3}{5}\right)
+
(1-\sigma)\left(\frac{1}{4}\right)
=
\frac{1}{4}+\frac{7}{20}\sigma.
\]

\subsection{Formal definitions of the properties satisfied by similarity measures} \label{A: SimNotions}

We state the relevant properties for an arbitrary similarity measure
\(S:\operatorname{Arg}(\mathcal L)\times\operatorname{Arg}(\mathcal L)\to[0,1]\).
\citet{David2021} proves that the syntactic--semantic Jaccard similarity satisfies each property below.

\begin{property}[Maximality]
A similarity measure \(S\) satisfies maximality if, for every
\(a\in\operatorname{Arg}(\mathcal L)\),
\[
S(a,a)=1.
\]
\end{property}

Maximality says that each argument is maximally similar to itself. Since
similarity values lie in \([0,1]\), maximal similarity is represented by the
value \(1\).

\begin{property}[Symmetry]
A similarity measure \(S\) satisfies symmetry if, for all
\(a,b\in\operatorname{Arg}(\mathcal L)\),
\[
S(a,b)=S(b,a).
\]
\end{property}

Symmetry says that similarity does not depend on the order in which the two
arguments are compared.

\begin{property}[Triangle inequality]
A similarity measure \(S\) satisfies the triangle inequality if, for all
\(a,b,c\in\operatorname{Arg}(\mathcal L)\),
\[
1+S(a,c)\geq S(a,b)+S(b,c).
\]
\end{property}

This condition is the triangle inequality written in similarity form. Equivalently,
the dissimilarity \(d(a,b)=1-S(a,b)\) satisfies
\[
d(a,c)\leq d(a,b)+d(b,c).
\]
Thus, if \(a\) is close to \(b\) and \(b\) is close to \(c\), then \(a\) cannot
be arbitrarily far from \(c\).

\begin{property}[Substitution]
A similarity measure \(S\) satisfies substitution if, for all
\(a,b,c\in\operatorname{Arg}(\mathcal L)\),
\[
S(a,b)=1
\quad\Longrightarrow\quad
S(a,c)=S(b,c).
\]
\end{property}

Substitution says that maximally similar arguments are interchangeable for
similarity comparisons with any third argument.\\

For a formula \(\phi\), let \(\operatorname{At}(\phi)\) denote the set of atoms
occurring in \(\phi\). For a finite set of formulas \(\Phi\), define
\[
\operatorname{At}(\Phi)
=
\bigcup_{\phi\in\Phi}\operatorname{At}(\phi).
\]

\begin{property}[Minimality]
A similarity measure \(S\) satisfies minimality if, for all
\(a=(\Phi,\phi)\) and \(b=(\Psi,\psi)\) in
\(\operatorname{Arg}(\mathcal L)\), the following conditions imply
\(S(a,b)=0\):
\begin{enumerate}
    \item \(a\not\approx b\);
    \item \(\operatorname{At}(\Phi)\cap\operatorname{At}(\Psi)=\emptyset\);
    \item \(\operatorname{At}(\phi)\cap\operatorname{At}(\psi)=\emptyset\).
\end{enumerate}
\end{property}

Minimality says that arguments with no shared content in both their premises and their conclusions must have zero similarity.

\begin{property}[Non-zero]
A similarity measure \(S\) satisfies non-zero similarity if, for all
\(a=(\Phi,\phi)\) and \(b=(\Psi,\psi)\) in
\(\operatorname{Arg}(\mathcal L)\),
\[
\operatorname{Co}(\Phi,\Psi)\neq\emptyset
\quad\Longrightarrow\quad
S(a,b)>0.
\]
\end{property}

The non-zero property says that shared premise content is enough to induce
positive similarity.

\begin{property}[Strict monotony]
A similarity measure \(S\) satisfies monotony if, for all
\(a=(\Phi,\phi)\), \(b=(\Psi,\psi)\), and \(c=(\Xi,\xi)\) in
\(\operatorname{Arg}(\mathcal L)\), the following conditions imply
\(S(a,b)\geq S(a,c)\):
\begin{enumerate}
    \item \(\phi\equiv\psi\) or
    \(\operatorname{At}(\phi)\cap\operatorname{At}(\xi)=\emptyset\);
    \item \(\operatorname{Co}(\Phi,\Xi)\subseteq
    \operatorname{Co}(\Phi,\Psi)\);
    \item
    \[
    \Psi\setminus\operatorname{Co}(\Psi,\Phi)
    =
    \operatorname{Co}\bigl(
    \Psi\setminus\operatorname{Co}(\Psi,\Phi),
    \Xi\setminus\operatorname{Co}(\Xi,\Phi)
    \bigr).
    \]
\end{enumerate}
It satisfies strict monotony if, in addition, either
\[
\operatorname{Co}(\Phi,\Xi)\subsetneq
\operatorname{Co}(\Phi,\Psi),
\]
or
\[
\operatorname{Co}(\Phi,\Xi)\neq\emptyset
\quad\text{and}\quad
|\Xi\setminus\operatorname{Co}(\Xi,\Phi)|
>
|\Psi\setminus\operatorname{Co}(\Psi,\Phi)|.
\]
In either strict case,
\[
S(a,b)>S(a,c).
\]
\end{property}

Strict monotony says that, holding the conclusion comparison fixed or irrelevant,
an argument becomes more similar to \(a\) as it shares more of \(a\)'s premises
and adds fewer unrelated premises.

\begin{property}[Strict dominance]
A similarity measure \(S\) satisfies dominance if, for all
\(a=(\Phi,\phi)\), \(b=(\Psi,\psi)\), and \(c=(\Xi,\xi)\) in
\(\operatorname{Arg}(\mathcal L)\), the following conditions imply
\(S(a,b)\geq S(a,c)\):
\begin{enumerate}
    \item \(\Psi\cong\Xi\);
    \item
    \[
    \operatorname{CN}_{\mathcal F}(\phi)
    \cap
    \operatorname{CN}_{\mathcal F}(\xi)
    \subseteq
    \operatorname{CN}_{\mathcal F}(\phi)
    \cap
    \operatorname{CN}_{\mathcal F}(\psi);
    \]
    \item
    \[
    \operatorname{CN}_{\mathcal F}(\psi)
    \setminus
    \operatorname{CN}_{\mathcal F}(\phi)
    \subseteq
    \operatorname{CN}_{\mathcal F}(\xi)
    \setminus
    \operatorname{CN}_{\mathcal F}(\phi).
    \]
\end{enumerate}
It satisfies strict dominance if, in addition, either the inclusion in condition
2 is strict, or
\[
\operatorname{CN}_{\mathcal F}(\phi)
\cap
\operatorname{CN}_{\mathcal F}(\xi)
\neq\emptyset
\]
and the inclusion in condition 3 is strict. In either strict case,
\[
S(a,b)>S(a,c).
\]
\end{property}

Strict dominance says that, when two candidate arguments have equivalent premise
sets, the one whose conclusion shares more finite logical consequences with the
reference conclusion is more similar.

\subsection{More information on Reproducing Kernel Hilbert Spaces} \label{A:rkhs}

Let $X$ be a set. A function
\[
K:X\times X\to\mathbb R
\]
is a positive semi-definite kernel if, for every finite collection
$x_1,\ldots,x_n\in X$, the Gram matrix
\[
\mathbf K
=
\big(K(x_i,x_j)\big)_{i,j=1}^n
\]
is positive semi-definite. Equivalently, for every
$c_1,\ldots,c_n\in\mathbb R$,
\[
\sum_{i=1}^n\sum_{j=1}^n
c_i c_j K(x_i,x_j)
\geq 0.
\]

A reproducing kernel Hilbert space (RKHS) associated with $K$ is a Hilbert space
$\mathcal H_K$ of real-valued functions on $X$ such that, for every $x\in X$,
the function
\[
K(\cdot,x):X\to\mathbb R
\]
belongs to $\mathcal H_K$, and the reproducing property holds:
\[
h(x)
=
\langle h,K(\cdot,x)\rangle_{\mathcal H_K}
\qquad
\text{for all } h\in\mathcal H_K.
\]
The Moore--Aronszajn theorem states that every positive semi-definite kernel
$K$ uniquely determines such an RKHS $\mathcal H_K$
\citep{Aronszajn1950,Paulsen_Raghupathi_2016}.

The associated canonical feature map is
\[
\Theta_K:X\to\mathcal H_K,
\qquad
\Theta_K(x)=K(\cdot,x).
\]
Thus, each object $x\in X$ is represented by the function that records its
kernel similarity to every other element of $X$. In this sense, $\Theta_K$ is an embedding: it represents each element of $X$ as a
point in a Hilbert space whose inner products are given by $K$. The key identity is
\[
\langle
\Theta_K(x),
\Theta_K(y)
\rangle_{\mathcal H_K}
=
K(x,y),
\]
which follows directly from the reproducing property. Hence the kernel can be
interpreted as an inner product between the feature representations
$\Theta_K(x)$ and $\Theta_K(y)$, even when these representations are not written
explicitly or are infinite-dimensional.\\

\subsection{Finite-dimensional approximation by kernel PCA}
\label{A:kernel_pca}

The logical embedding defined in Section~\ref{S:logical_embeddings} maps each
argument \(a\) to an element
\[
\Theta_\sigma(a)\in\mathcal H_\sigma
\]
of the RKHS induced by the kernel \(\operatorname{sim}^{\sigma}\). This space
may be infinite-dimensional, so the embedding is not directly a finite vector
representation.

The canonical RKHS feature map has the form
\[
\Theta_\sigma(a)(\cdot)=\operatorname{sim}^{\sigma}(\cdot,a).
\]
Thus, the embedding of \(a\) is a function that records the similarity of \(a\)
to every possible argument. On a finite corpus \(a_1,\ldots,a_n\), we observe
only the restriction of this function to the sample:
\[
\bigl(
\operatorname{sim}^{\sigma}(a_1,a),
\ldots,
\operatorname{sim}^{\sigma}(a_n,a)
\bigr)=(\Theta_\sigma(a)(a_1), \cdots,\Theta_\sigma(a)(a_n)).
\]

By restricting to the argument corpus, these restricted feature functions form
the columns of the Gram matrix
\[
K^\sigma_{ij}
=
\operatorname{sim}^{\sigma}(a_i,a_j).
\]

\noindent where each column $j$ is then a finite-dimensional approximation to the logical embedding of argument $j$.  When \(n\) is large, however, representing each argument by its similarities to all observed arguments is usually too high-dimensional. Kernel PCA compresses this information by projecting the centered
RKHS embeddings onto their leading empirical principal directions. The resulting
coordinates give the best rank-\(r\) linear approximation to the centered
logical embeddings on the observed corpus.

In practice, we do not observe \(K^\sigma\) exactly. We estimate it from
natural-language text, obtaining \(\widehat K^\sigma\). Kernel PCA applied to
\(\widehat K^\sigma\) therefore provides a finite-dimensional approximation to
the ideal logical embeddings, with two sources of approximation: the empirical
similarity estimates and the rank-\(r\) PCA truncation.

We now state the finite-sample construction for this approximation. Let
\(K:X\times X\to\mathbb R\) be a positive semidefinite kernel on a set \(X\),
with RKHS \(\mathcal H_K\), and let \(x_1,\ldots,x_n\in X\). Define the Gram
matrix
\[
\mathbf K=(K(x_i,x_j))_{i,j=1}^n.
\]
Let
\[
\mathbf H=I_n-\frac{1}{n}\mathbf 1\mathbf 1^\top
\qquad\text{and}\qquad
\mathbf K_c=\mathbf H\mathbf K\mathbf H
\]
be the centering matrix and the centered Gram matrix.

Let
\[
\Theta_K(x)=K(\cdot,x)
\]
be the canonical RKHS feature map, and define the centered embeddings
\[
Z_i
=
\Theta_K(x_i)-\frac{1}{n}\sum_{j=1}^n\Theta_K(x_j).
\]
The centered Gram matrix records exactly the inner products among these centered
embeddings:
\[
(\mathbf K_c)_{ij}
=
\langle Z_i,Z_j\rangle_{\mathcal H_K}.
\]
Thus, kernel PCA performs ordinary PCA on the centered RKHS embeddings
\(Z_1,\ldots,Z_n\), using only the finite matrix \(\mathbf K_c\).

Let
\[
\mathbf K_c=U\Lambda U^\top
\]
be an eigendecomposition, with eigenvalues
\[
\lambda_1\geq\lambda_2\geq\cdots\geq\lambda_n\geq0
\]
and orthonormal eigenvectors \(u_1,\ldots,u_n\). For each
\(\lambda_\ell>0\), the \(\ell\)th empirical principal direction in
\(\mathcal H_K\) is
\[
\hat v_\ell
=
\frac{1}{\sqrt{\lambda_\ell}}
\sum_{j=1}^n (u_\ell)_j Z_j .
\]
The coordinate of \(x_i\) along this direction is
\[
\langle Z_i,\hat v_\ell\rangle_{\mathcal H_K}
=
\sqrt{\lambda_\ell}(u_\ell)_i.
\]
Therefore, the rank-\(r\) kernel PCA coordinates are given by the rows of
\[
U_r\Lambda_r^{1/2},
\]
where \(U_r\) contains the first \(r\) eigenvectors and \(\Lambda_r\) contains
the corresponding eigenvalues.

These coordinates have the standard PCA optimality property. Among all
rank-\(r\) orthogonal projections \(P\) on \(\mathcal H_K\), the projection onto
\[
\operatorname{span}\{\hat v_1,\ldots,\hat v_r\}
\]
minimizes the average squared reconstruction error:
\[
\min_{\operatorname{rank}(P)\leq r}
\frac{1}{n}\sum_{i=1}^n
\|Z_i-PZ_i\|_{\mathcal H_K}^2
=
\sum_{\ell>r}\frac{\lambda_\ell}{n}.
\]
Equivalently, by the Eckart--Young--Mirsky theorem, the rank-\(r\) truncation
\[
U_r\Lambda_r U_r^\top
\]
is the best rank-\(r\) approximation to \(\mathbf K_c\) in Frobenius norm:
\[
\min_{\operatorname{rank}(M)\leq r}
\|\mathbf K_c-M\|_F^2
=
\sum_{\ell>r}\lambda_\ell^2.
\]

Applying this construction with
\(X=\operatorname{Arg}(\mathcal L)\) and
\(K=\operatorname{sim}^{\sigma}\) shows that kernel PCA on the ideal Gram matrix
\(K^\sigma\) gives the best \(r\)-dimensional linear approximation to the
centered logical embeddings of the observed arguments. In practice, we use
\(\widehat K^\sigma\) in place of \(K^\sigma\). When
\(\widehat K^\sigma\) is a good approximation to \(K^\sigma\), the resulting
coordinates approximate the finite-sample kernel PCA coordinates of the ideal
logical embeddings. Thus, the empirical coordinates used in the paper are
low-dimensional approximations to the logical embeddings, restricted to the
observed corpus.

The reduction of kernel PCA to the Gram matrix eigenproblem follows
\citet{Scholkopf1998}; see also \citet{ScholkopfSmola2002}. The best
rank-\(r\) matrix approximation property follows from
\citet{EckartYoung1936}.

\subsection{Omitted proofs}

\subsubsection{Auxiliary Lemmas}

Given a set $X$ and an equivalence relation $\sim$ on $X$, the quotient of $X$
by $\sim$ is
\[
\faktor{X}{\sim}:=\{[x]_\sim: x\in X\},
\]
where $[x]_\sim=\{y\in X:y\sim x\}$. For $\Phi\subseteq\mathcal L$, write
\[
\faktor{\Phi}{\equiv}:=\{[\xi]_{\equiv}:\xi\in\Phi\},
\]
where $[\xi]_{\equiv}=\{\eta\in\mathcal L:\eta\equiv\xi\}$ is the equivalence
class of $\xi$ in the ambient language $\mathcal L$.

\begin{lemma}\label{AL:NoEquivalentPremises}
Let $a=(\Phi,\phi)\in\operatorname{Arg}(\mathcal L)$. If
$\xi,\zeta\in\Phi$ and $\xi\equiv\zeta$, then $\xi=\zeta$.
\end{lemma}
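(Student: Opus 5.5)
The plan is a proof by contradiction from the non-redundancy clause in Definition~\ref{Def:argument}. Suppose $a=(\Phi,\phi)\in\operatorname{Arg}(\mathcal L)$ and that there are $\xi,\zeta\in\Phi$ with $\xi\equiv\zeta$ but $\xi\neq\zeta$. I will exhibit a proper subset of $\Phi$ that still entails $\phi$, which contradicts non-redundancy. The natural candidate is
\[
\Phi' = \Phi\setminus\{\zeta\}.
\]
It is a proper subset of $\Phi$ because $\zeta\in\Phi$. It still contains $\xi$ because $\xi\neq\zeta$.

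The key step is to show $\Phi'\vdash\phi$, working directly with the semantic definition of $\vdash$.
\begin{enumerate}
\item Let $v$ be any valuation with $v(\eta)=T$ for every $\eta\in\Phi'$.
\item Since $\xi\in\Phi'$, we have $v(\xi)=T$.
\item Since $\xi\vdash\zeta$, which is one half of $\xi\equiv\zeta$, it follows that $v(\zeta)=T$.
\item Hence $v$ assigns $T$ to every formula in $\Phi'\cup\{\zeta\}=\Phi$.
\item Because $\Phi\vdash\phi$, we get $v(\phi)=T$.
\end{enumerate}
As $v$ was arbitrary, $\Phi'\vdash\phi$.

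This contradicts clause~3 of Definition~\ref{Def:argument}, which says no $\Phi'\subsetneq\Phi$ satisfies $\Phi'\vdash\phi$. Therefore $\xi=\zeta$. Consistency of $\Phi$ plays no role in this argument.

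There is no serious obstacle; the only point requiring care is the set-theoretic bookkeeping. One must remove exactly one of the two distinct formulas, so that the remaining one is still available to witness the removed one. This is precisely where the hypothesis $\xi\neq\zeta$ is used.
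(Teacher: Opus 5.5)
Your proof is correct and matches the paper's argument. The only difference is relabeling: the paper deletes $\xi$ and recovers it from $\zeta$ via $\zeta\vdash\xi$, while you delete $\zeta$ and recover it from $\xi$ via $\xi\vdash\zeta$, and both reach the same contradiction with non-redundancy.
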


\begin{proof}
Suppose, toward a contradiction, that $\xi,\zeta\in\Phi$, $\xi\neq\zeta$, and
$\xi\equiv\zeta$. Let $\Phi'=\Phi\setminus\{\xi\}$. We show that
$\Phi'\vdash\phi$.

Let $v$ be any valuation that satisfies every formula in $\Phi'$. Since
$\zeta\in\Phi'$, $v$ satisfies $\zeta$. Because $\zeta\equiv\xi$, every
valuation satisfying $\zeta$ also satisfies $\xi$. Hence $v$ satisfies $\xi$.
Therefore $v$ satisfies every formula in $\Phi$. Since $\Phi\vdash\phi$, it
follows that $v$ satisfies $\phi$.

Thus every valuation satisfying $\Phi'$ also satisfies $\phi$, so
$\Phi'\vdash\phi$. But $\Phi'\subsetneq\Phi$, contradicting the
non-redundancy of $\Phi$. Therefore $\xi=\zeta$.
\end{proof}

\begin{lemma}\label{AL: Lemma1Kernel}
Let $a=(\Phi,\phi)$ and $b=(\Psi,\psi)$ be arguments in
$\operatorname{Arg}(\mathcal L)$. Then the map
\[
\pi:\operatorname{Co}(\Phi,\Psi)
\to
\faktor{\Phi}{\equiv}\cap\faktor{\Psi}{\equiv},
\qquad
\xi\mapsto[\xi]_{\equiv},
\]
is a bijection. In particular,
\[
|\operatorname{Co}(\Phi,\Psi)|
=
\left|
\faktor{\Phi}{\equiv}\cap\faktor{\Psi}{\equiv}
\right|.
\]
\end{lemma}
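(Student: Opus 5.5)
The plan is to verify directly that $\pi$ is well defined, surjective, and injective, with injectivity as the only step that uses the argument structure.

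\emph{Well-definedness.} Take $\xi\in\operatorname{Co}(\Phi,\Psi)$. By the definition of the common formulas operator, $\xi\in\Phi$, so $[\xi]_{\equiv}\in\faktor{\Phi}{\equiv}$. Also by definition there is some $\zeta\in\Psi$ with $\xi\equiv\zeta$. Then $[\xi]_{\equiv}=[\zeta]_{\equiv}\in\faktor{\Psi}{\equiv}$. Hence $\pi(\xi)$ lies in the intersection.

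\emph{Surjectivity.} Take a class $C\in\faktor{\Phi}{\equiv}\cap\faktor{\Psi}{\equiv}$. Since $C\in\faktor{\Phi}{\equiv}$, we can write $C=[\xi]_{\equiv}$ for some $\xi\in\Phi$. Since $C\in\faktor{\Psi}{\equiv}$, we can write $C=[\zeta]_{\equiv}$ for some $\zeta\in\Psi$. Equal classes give $\xi\equiv\zeta$, so $\xi\in\operatorname{Co}(\Phi,\Psi)$ and $\pi(\xi)=C$.

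\emph{Injectivity.} This is the main point, and it is where Lemma~\ref{AL:NoEquivalentPremises} is needed. Suppose $\xi,\xi'\in\operatorname{Co}(\Phi,\Psi)\subseteq\Phi$ with $[\xi]_{\equiv}=[\xi']_{\equiv}$. Then $\xi\equiv\xi'$, and both are premises of the argument $a$. Lemma~\ref{AL:NoEquivalentPremises} applied to $a$ gives $\xi=\xi'$. Without non-redundancy this step would fail, because $\Phi$ could contain two distinct but equivalent formulas that collapse to one class.

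\emph{Cardinality.} Arguments have finite premise sets, so both sides are finite. A bijection between them gives $|\operatorname{Co}(\Phi,\Psi)|=\bigl|\faktor{\Phi}{\equiv}\cap\faktor{\Psi}{\equiv}\bigr|$.
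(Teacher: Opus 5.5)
Your proof is correct and follows the paper's argument essentially step for step: well-definedness and surjectivity come from choosing representatives in $\Phi$ and $\Psi$, and injectivity comes from Lemma~\ref{AL:NoEquivalentPremises}, which is where non-redundancy of the premise set enters. The cardinality identity then follows directly from the bijection, as in the paper.
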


\begin{proof}
First, the map is well-defined. If $\xi\in\operatorname{Co}(\Phi,\Psi)$, then
$\xi\in\Phi$ and there exists $\zeta\in\Psi$ such that $\xi\equiv\zeta$. Hence
$[\xi]_{\equiv}\in\faktor{\Phi}{\equiv}$ and
$[\xi]_{\equiv}=[\zeta]_{\equiv}\in\faktor{\Psi}{\equiv}$. Therefore
$[\xi]_{\equiv}\in\faktor{\Phi}{\equiv}\cap\faktor{\Psi}{\equiv}$.

We now prove surjectivity. Let
\[
C\in\faktor{\Phi}{\equiv}\cap\faktor{\Psi}{\equiv}.
\]
Since $C\in\faktor{\Phi}{\equiv}$, there exists $\xi\in\Phi$ such that
$C=[\xi]_{\equiv}$. Since $C\in\faktor{\Psi}{\equiv}$, there exists
$\zeta\in\Psi$ such that $C=[\zeta]_{\equiv}$. Hence $\xi\equiv\zeta$, so
$\xi\in\operatorname{Co}(\Phi,\Psi)$. Moreover, $\pi(\xi)=[\xi]_{\equiv}=C$.
Thus $\pi$ is surjective.

We now prove injectivity. Let $\xi,\xi'\in\operatorname{Co}(\Phi,\Psi)$ and
suppose that $\pi(\xi)=\pi(\xi')$. Then
\[
[\xi]_{\equiv}=[\xi']_{\equiv},
\]
so $\xi\equiv\xi'$. Since $\xi,\xi'\in\Phi$ and $\Phi$ is the premise set of an
argument, Lemma~\ref{AL:NoEquivalentPremises} implies that $\xi=\xi'$.
Therefore, $\pi$ is injective.

Hence $\pi$ is bijective.
\end{proof}

\begin{lemma}\label{AL: Lemma2Kernel}
Let $a=(\Phi,\phi)$ be an argument in $\operatorname{Arg}(\mathcal L)$.
Then
\[
|\Phi|=\left|\faktor{\Phi}{\equiv}\right|.
\]
\end{lemma}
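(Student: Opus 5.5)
The statement asks for $|\Phi|=\left|\faktor{\Phi}{\equiv}\right|$. I will exhibit a bijection between $\Phi$ and $\faktor{\Phi}{\equiv}$, namely the quotient map
\[
q:\Phi\to\faktor{\Phi}{\equiv},\qquad \xi\mapsto[\xi]_{\equiv}.
\]
Since $\Phi$ is finite by the definition of an argument, a bijection gives equality of the (finite) cardinalities.

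Surjectivity of $q$ is immediate from the definition of $\faktor{\Phi}{\equiv}$: every element of this set has the form $[\xi]_{\equiv}$ for some $\xi\in\Phi$.

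For injectivity, suppose $\xi,\zeta\in\Phi$ with $q(\xi)=q(\zeta)$. Then $[\xi]_{\equiv}=[\zeta]_{\equiv}$, so $\xi\equiv\zeta$. Because $\Phi$ is the premise set of an argument, Lemma~\ref{AL:NoEquivalentPremises} gives $\xi=\zeta$, so $q$ is injective.

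An alternative route is to apply Lemma~\ref{AL: Lemma1Kernel} with $b=a$. One checks that $\operatorname{Co}(\Phi,\Phi)=\Phi$, since each $\xi$ is equivalent to itself, and that $\faktor{\Phi}{\equiv}\cap\faktor{\Phi}{\equiv}=\faktor{\Phi}{\equiv}$, which gives the claim at once.

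I do not expect any step to be a real obstacle. The only substantive ingredient is the non-redundancy condition, and it is already packaged in Lemma~\ref{AL:NoEquivalentPremises}. Without that condition the quotient map could collapse distinct but equivalent premises, and the equality would fail.
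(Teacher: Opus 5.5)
Your proof is correct and is essentially the paper's own argument: the paper also uses the quotient map $\xi\mapsto[\xi]_{\equiv}$, gets surjectivity from the definition of $\faktor{\Phi}{\equiv}$, and gets injectivity from Lemma~\ref{AL:NoEquivalentPremises}. Your alternative route through Lemma~\ref{AL: Lemma1Kernel} with $b=a$ is also valid, since $\operatorname{Co}(\Phi,\Phi)=\Phi$ and $\faktor{\Phi}{\equiv}\cap\faktor{\Phi}{\equiv}=\faktor{\Phi}{\equiv}$.
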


\begin{proof}
Consider the map
\[
\pi:\Phi\to\faktor{\Phi}{\equiv},
\qquad
\xi\mapsto[\xi]_{\equiv}.
\]
This map is surjective by definition of $\faktor{\Phi}{\equiv}$.\\

We now show that it is injective. Let $\xi,\zeta\in\Phi$ and suppose that
$\pi(\xi)=\pi(\zeta)$. Then
\[
[\xi]_{\equiv}=[\zeta]_{\equiv},
\]
so $\xi\equiv\zeta$. Since $\Phi$ is the premise set of an argument,
Lemma~\ref{AL:NoEquivalentPremises} implies that $\xi=\zeta$. Hence $\pi$ is
injective. Therefore $\pi$ is a bijection, and consequently
\[
|\Phi|=\left|\faktor{\Phi}{\equiv}\right|.
\]
\end{proof}

\subsubsection{Proof of Theorem \ref{T:Kernel}} \label{A: Kernel}

\begin{proof}
Let $a=(\Phi,\phi)$ and $b=(\Psi,\psi)$ be arguments in
$\operatorname{Arg}(\mathcal L)$.

For any set $X$, write
\[
\mathcal P_{\mathrm{fin}}(X)
=
\{A\subseteq X: |A|<\infty\}
\]
for the set of finite subsets of $X$.

First define the usual Tanimoto coefficient on pairs of finite sets with
nonempty union by
\[
\tilde{k}_T(A,B)
=
\frac{|A\cap B|}{|A\cup B|},
\qquad A\cup B\neq\varnothing .
\]
This is the Tanimoto kernel on finite sets, also known as the Jaccard kernel
\citep[p.~301]{Passerini2013}. To handle the only case in which this expression
is undefined, we extend it by setting
\[
k_T(A,B)
=
\begin{cases}
\tilde{k}_T(A,B), & \text{if } A\cup B\neq\varnothing,\\[1.2em]
1, & \text{if } A=B=\varnothing .
\end{cases}
\]

We first verify that this extension is still positive semi-definite. Let
$A_1,\ldots,A_n$ be finite sets. After permutation, we may assume without loss of generality that
\[
A_1=\cdots=A_m=\varnothing
\]
and that $A_{m+1},\ldots,A_n$ are nonempty. If $m=0$, all sets are nonempty, and
the result follows directly from the positive semi-definiteness of the usual
Tanimoto kernel. If $m=n$, then the Gram matrix is the all-ones matrix, which is
positive semi-definite. Thus assume $0<m<n$.

For $i\leq m<j$, we have
\[
k_T(A_i,A_j)
=
\tilde{k}_T(\varnothing,A_j)
=
\frac{|\varnothing\cap A_j|}{|\varnothing\cup A_j|}
=
0.
\]
Therefore the Gram matrix $\big(k_T(A_i,A_j)\big)_{i,j=1}^n$ has the block form
\[
\begin{pmatrix}
\mathbf 1_m\mathbf 1_m^\top & 0 \\
0 & G
\end{pmatrix},
\]
where $\mathbf 1_m\mathbf 1_m^\top$ is the $m\times m$ all-ones matrix and
\[
G
=
\big(\tilde{k}_T(A_i,A_j)\big)_{i,j=m+1}^n
\]
is the Gram matrix of the usual Tanimoto kernel on the nonempty sets
$A_{m+1},\ldots,A_n$.

The first block is positive semi-definite because, for any
$u\in\mathbb R^m$,
\[
u^\top(\mathbf 1_m\mathbf 1_m^\top)u
=
(\mathbf 1_m^\top u)^2
\geq 0.
\]
The second block is positive semi-definite by the positive semi-definiteness of
the Tanimoto kernel \citep[p.~301]{Passerini2013}. Hence the block-diagonal
Gram matrix is positive semi-definite. Therefore $k_T$ is a positive
semi-definite kernel on $\mathcal P_{\mathrm{fin}}(X)$, for any ambient set
$X$.

We now rewrite the syntactic part of the similarity. By
Lemmas~\ref{AL: Lemma1Kernel} and~\ref{AL: Lemma2Kernel},
\[
|\operatorname{Co}(\Phi,\Psi)|
=
\left|
\faktor{\Phi}{\equiv}\cap\faktor{\Psi}{\equiv}
\right|,
\]
and
\[
|\Phi|
=
\left|\faktor{\Phi}{\equiv}\right|,
\qquad
|\Psi|
=
\left|\faktor{\Psi}{\equiv}\right|.
\]
Therefore, whenever
$\faktor{\Phi}{\equiv}\cup\faktor{\Psi}{\equiv}\neq\varnothing$, inclusion--
exclusion gives
\[
s_{\mathrm{syn}}(\Phi,\Psi)
=
\frac{
\left|\faktor{\Phi}{\equiv}\cap\faktor{\Psi}{\equiv}\right|
}{
\left|\faktor{\Phi}{\equiv}\cup\faktor{\Psi}{\equiv}\right|
}
=
k_T\left(
\faktor{\Phi}{\equiv},
\faktor{\Psi}{\equiv}
\right).
\]
If
$\faktor{\Phi}{\equiv}=\faktor{\Psi}{\equiv}=\varnothing$, then
$\Phi=\Psi=\varnothing$, and both sides are equal to $1$ by the empty-set
convention for $k_T$. Hence, in all cases,
\[
s_{\mathrm{syn}}(\Phi,\Psi)
=
k_T\left(
\faktor{\Phi}{\equiv},
\faktor{\Psi}{\equiv}
\right).
\]

Similarly, by definition of the semantic component,
\[
s_{\mathrm{sem}}(\phi,\psi)
=
k_T\left(
\operatorname{CN}_{\mathcal F}(\phi),
\operatorname{CN}_{\mathcal F}(\psi)
\right),
\]
again using the convention that the similarity of two empty consequence sets is
$1$.

Now define
\[
\Pi:\operatorname{Arg}(\mathcal L)\to
\mathcal P_{\mathrm{fin}}\left(\faktor{\mathcal L}{\equiv}\right),
\qquad
\Pi(\Phi,\phi)=\faktor{\Phi}{\equiv},
\]
and
\[
\rho:\operatorname{Arg}(\mathcal L)\to
\mathcal P_{\mathrm{fin}}(\mathcal F),
\qquad
\rho(\Phi,\phi)=\operatorname{CN}_{\mathcal F}(\phi).
\]
These maps are well-defined because premise sets are finite and
$\mathcal F$ is finite.

Define
\[
K_{\Pi}(a,b)=k_T(\Pi(a),\Pi(b))
\]
and
\[
K_{\rho}(a,b)=k_T(\rho(a),\rho(b)).
\]
Since $k_T$ is positive semi-definite on finite subsets, $K_{\Pi}$ and
$K_{\rho}$ are positive semi-definite kernels on
$\operatorname{Arg}(\mathcal L)$ by closure of positive semi-definite kernels
under pullback along arbitrary maps
\citep{Aronszajn1950,Paulsen_Raghupathi_2016}.

Finally, for any $0<\sigma<1$,
\[
\operatorname{sim}^{\sigma}(a,b)
=
\sigma K_{\Pi}(a,b)
+
(1-\sigma)K_{\rho}(a,b).
\]
Positive semi-definite kernels are closed under nonnegative scalar
multiplication and finite sums
\citep{Aronszajn1950,Paulsen_Raghupathi_2016}. Since $\sigma>0$ and
$1-\sigma>0$, it follows that $\operatorname{sim}^{\sigma}$ is a positive
semi-definite kernel on $\operatorname{Arg}(\mathcal L)$.
\end{proof}

\subsubsection{Proof of Theorem \ref{T: CKernel}} \label{A: CKernel}

\begin{proof}
Let $a,b\in\operatorname{Arg}(\mathcal L)$. Since
$\Theta_{\sigma}$ is the canonical feature map associated with
$\operatorname{sim}^{\sigma}$, we have
\[
\langle
\Theta_{\sigma}(x),
\Theta_{\sigma}(y)
\rangle_{\mathcal H_{\sigma}}
=
\operatorname{sim}^{\sigma}(x,y)
\]
for all $x,y\in\operatorname{Arg}(\mathcal L)$.

First suppose that $\Theta_{\sigma}(a)=\Theta_{\sigma}(b)$. Then
\[
0
=
\|\Theta_{\sigma}(a)-\Theta_{\sigma}(b)\|_{\mathcal H_{\sigma}}^2.
\]
Expanding the squared norm gives
\[
0
=
\operatorname{sim}^{\sigma}(a,a)
+
\operatorname{sim}^{\sigma}(b,b)
-
2\operatorname{sim}^{\sigma}(a,b).
\]
Since each argument has maximal similarity with itself,
\[
\operatorname{sim}^{\sigma}(a,a)
=
\operatorname{sim}^{\sigma}(b,b)
=
1.
\]
Therefore
\[
\operatorname{sim}^{\sigma}(a,b)=1.
\]
By Theorem~\ref{T: Metrics}, this implies $a\approx b$.

Conversely, suppose that $a\approx b$. By Theorem~\ref{T: Metrics},
\[
\operatorname{sim}^{\sigma}(a,b)=1.
\]
Again using
$\operatorname{sim}^{\sigma}(a,a)=\operatorname{sim}^{\sigma}(b,b)=1$, we get
\[
\|\Theta_{\sigma}(a)-\Theta_{\sigma}(b)\|_{\mathcal H_{\sigma}}^2
=
1+1-2
=
0.
\]
Hence $\Theta_{\sigma}(a)=\Theta_{\sigma}(b)$.

Thus
\[
\Theta_{\sigma}(a)=\Theta_{\sigma}(b)
\quad\text{if and only if}\quad
a\approx b.
\]

It remains only to check that the quotient map is well-defined. If
$[a]_{\approx}=[b]_{\approx}$, then $a\approx b$, and the result just proved
implies
\[
\Theta_{\sigma}(a)=\Theta_{\sigma}(b).
\]
Therefore
\[
\overline{\Theta}_{\sigma}([a]_{\approx})=\Theta_{\sigma}(a)
\]
is well-defined. Moreover, if
\[
\overline{\Theta}_{\sigma}([a]_{\approx})
=
\overline{\Theta}_{\sigma}([b]_{\approx}),
\]
then $\Theta_{\sigma}(a)=\Theta_{\sigma}(b)$, so $a\approx b$, and hence
\[
[a]_{\approx}=[b]_{\approx}.
\]
Therefore $\overline{\Theta}_{\sigma}$ is injective.
\end{proof}

\section{Exhibits}
\subsection{Features of the Logical Embedding}
In this section, we present a series of exhibits that illustrate more clearly the functioning and advantages of the Logical Embeddings method. In the introduction we provided an example of how regular methods may fail to recover logical similarity.

A useful comparison is to compare the Gram matrix generated by the Semantic-Syntactic Jaccard similarity metric to a Gram matrix generated by LLM based comparisons. Our prompt is in Appendix~\ref{app:llm-prompts}. When we generate this matrix we find that LLMs rarely assign zero logical similarity. Instead, it almost always detects some vague theme or rhetorical overlap. The Logical Embeddings method is significantly stricter. It relies on clause-level decomposition and mutual entailment and assigns a positive similarity only when there is a clear logical relation between the components of two arguments. The result is a much sparser network, but where the connections are on average stronger and more interpretable. 

Figure~\ref{fig:network_doping_comparison} illustrates this point using only the subset of arguments under the doping topic from from the IBM-ArgQ-6.3kArgs dataset \citep{toledo2019}. Although the two graphs contain the same number of nodes, corresponding to the same set of arguments, their edge structures are radically different. The full-argument graph is nearly complete, with $89598$ edges, covering $98.5\%$ of all possible pairs. The Logical Embeddings graph, in contrast, contains only $6408$ edges, or $7.0\%$ of all possible pairs. It reveals that the full-argument approach struggles to separate arguments based on their relation, and produce an almost fully connected graph where there is no possibility to see meaningful distinctions. 

This has direct implications for classification. In the full-argument graph, pro and con arguments are mixed, which suggests that the similarity measure does not recover the latent argumentative structure of the topic. The Logical Embeddings graph instead displays a much clearer separation between the two sides. This is because it is a measure designed to capture logical rather than rhetorical or thematic proximity. Being sparse it is not an issue, it represents the selectivity of the model to connect two arguments.

\begin{figure}[t]
\centering
\caption{Logical Embeddings and Full-argument LLM.}
\includegraphics[width=\columnwidth]{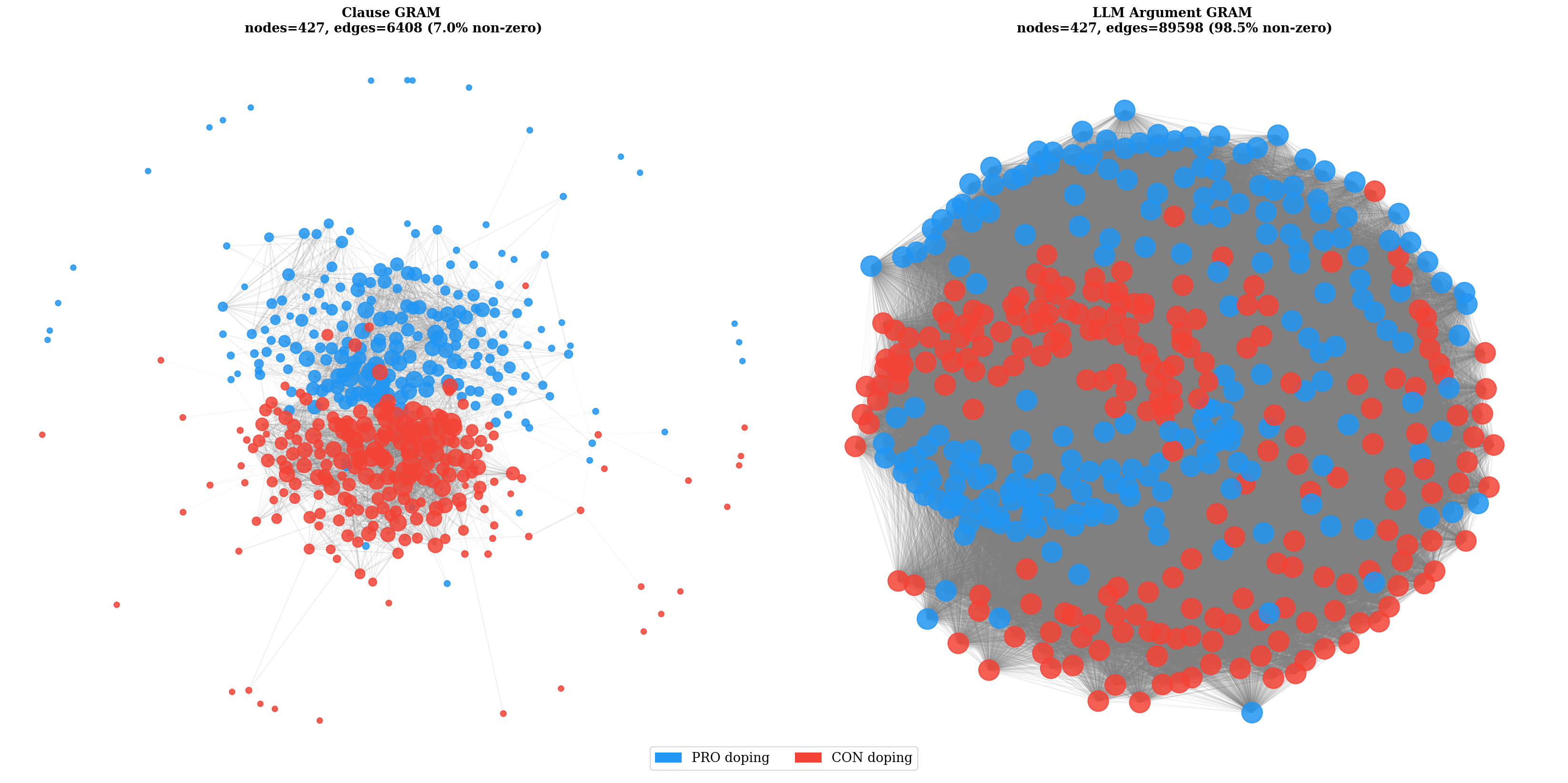}

\captionsetup{font={scriptsize}, width={0.5\textwidth}, justification=justified}
\caption*{\textit{Notes}: \networkcomparison }
\label{fig:network_doping_comparison}
\end{figure}

This leads to a final advantage of the Logical Embeddings model: traceability. Since the score is produced from the number of clauses covered by mutual entailment, it is possible to trace back exactly what is connected and why the LLM judged those clauses as entailed. Figure~\ref{fig:gram_configurations} illustrates schematically how the score is obtained in the simplest, and also most recurrent, case in our dataset: two premises and one subconclusion. First of all, the two families of clauses composing the text, premises and subconclusions, are treated separately and given equal weight. Each component is obtained by dividing the number of shared clauses by the total number of distinct clauses in that family. In our example, if the only subconclusion is entailed, then it contributes $0.5*1 = 0.5$ to the final score. The figure also makes clear another main feature of the model: it is not the raw number of mutual entailments that determines the score, but rather the number of clauses covered by those entailments. Looking at the scores $0.25$ and $0.5$ obtained using premises only, the number of mutual entailments is the same, but in the second case all clauses are connected, which gives a score of $0.5 \times1/1$, while in the first case the score is only $0.5 \times1/2$. This has clear implications, especially for higher scores, and helps explain why Logical Embeddings can display a higher average score than full-argument similarity. While mutual entailment and clause level comparison impose a stricter rule for connection, once a pair is judged entailed it has greater contributive power.
\begin{figure}[t]
\centering
\caption{How Logical Embeddings scores are computed?}
\includegraphics[width=0.85\columnwidth]{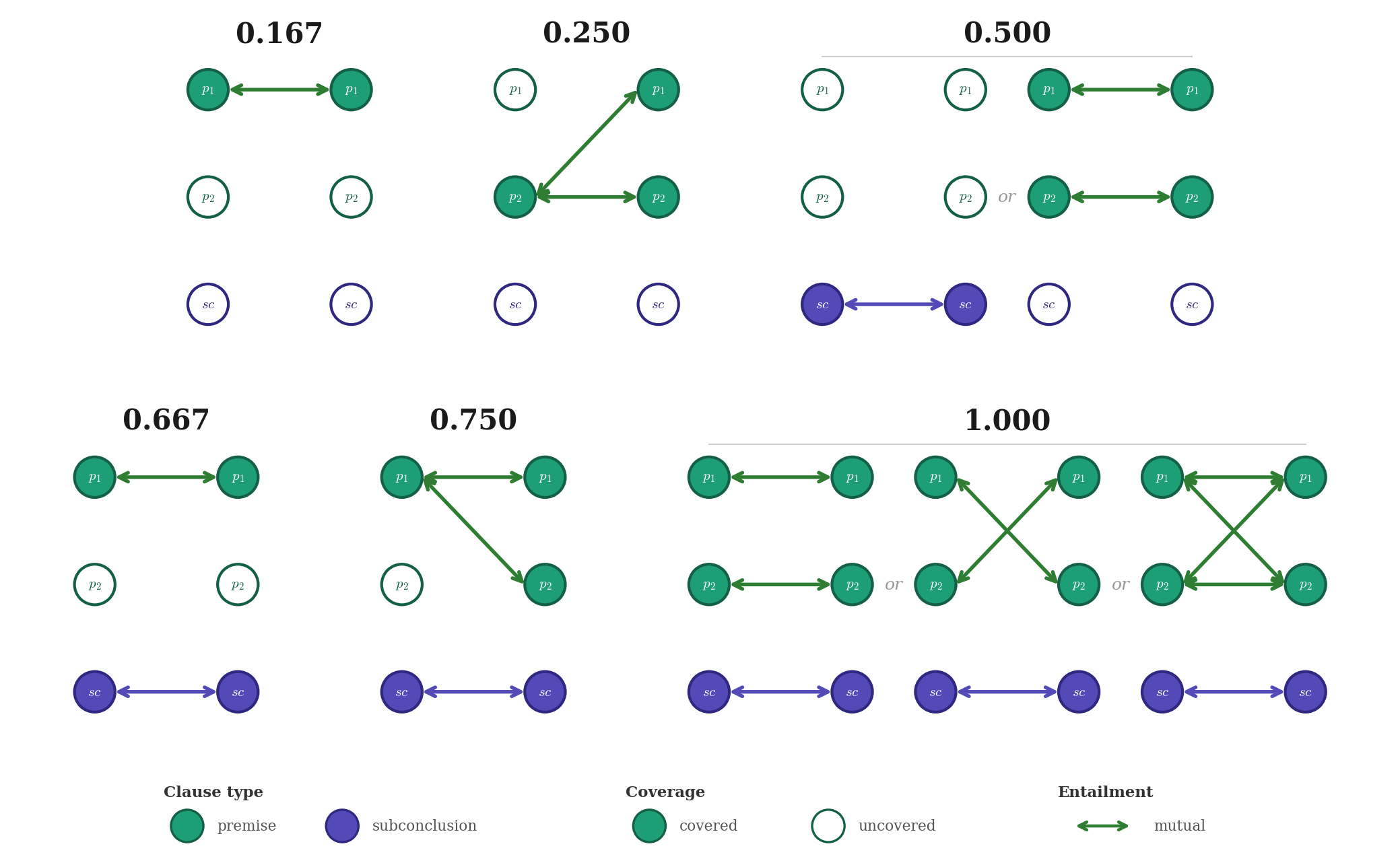}
\label{fig:gram_configurations}
\captionsetup{font={scriptsize}, width={0.5\textwidth}, justification=justified}
\caption*{\textit{Notes}: \gramconfiguration }
\end{figure}

To further illustrate the mechanics of the method, Figure~\ref{fig:exhibit_gram_77_vs_4612} shows exactly how a similarity score of $0.75$ is obtained. First, both texts are decomposed into premises and subconclusions. First of all the subconclusions are judged mutually entailed. Not making a vaccine mandatory implies that it is illegitimate to impose vaccination on individuals who oppose it for religious reasons and viceversa. This already contributes $0.5$ to the final score.

Moving to the premises, the second premise of the first argument, \textit{forcing vaccination violates personal convictions}, is judged to be entailed by both premises of the second argument, namely that mandatory vaccination infringes personal rights and that respecting bodily autonomy is fundamental for personal freedom. By contrast, the first premise of the first argument, \textit{some people have strong religious beliefs against vaccines}, is not entailed by either premise of the second argument, because it does not by itself imply bodily autonomy, nor does it imply a right to reject a vaccine mandate. We therefore end up with two distinct premises overall, of which one is effectively shared across the two arguments through mutual entailment. What matters here is that entailment creates a bridge between clauses: they are treated as similar when one logically entails the other and viceversa. In this example, this means that one premise is shared, while the other remains unmatched. The premise component is therefore equal to $\tfrac{1}{2}$, that contributes $0.25$ weighted. Adding this to the $0.5$ of the subconclusion, the overall similarity score is $0.75$.

\begin{figure}[t]
    \centering
    \caption{Construction of a Logical Embedding}
        \label{fig:exhibit_gram_77_vs_4612}
    \includegraphics[width=0.6\columnwidth]{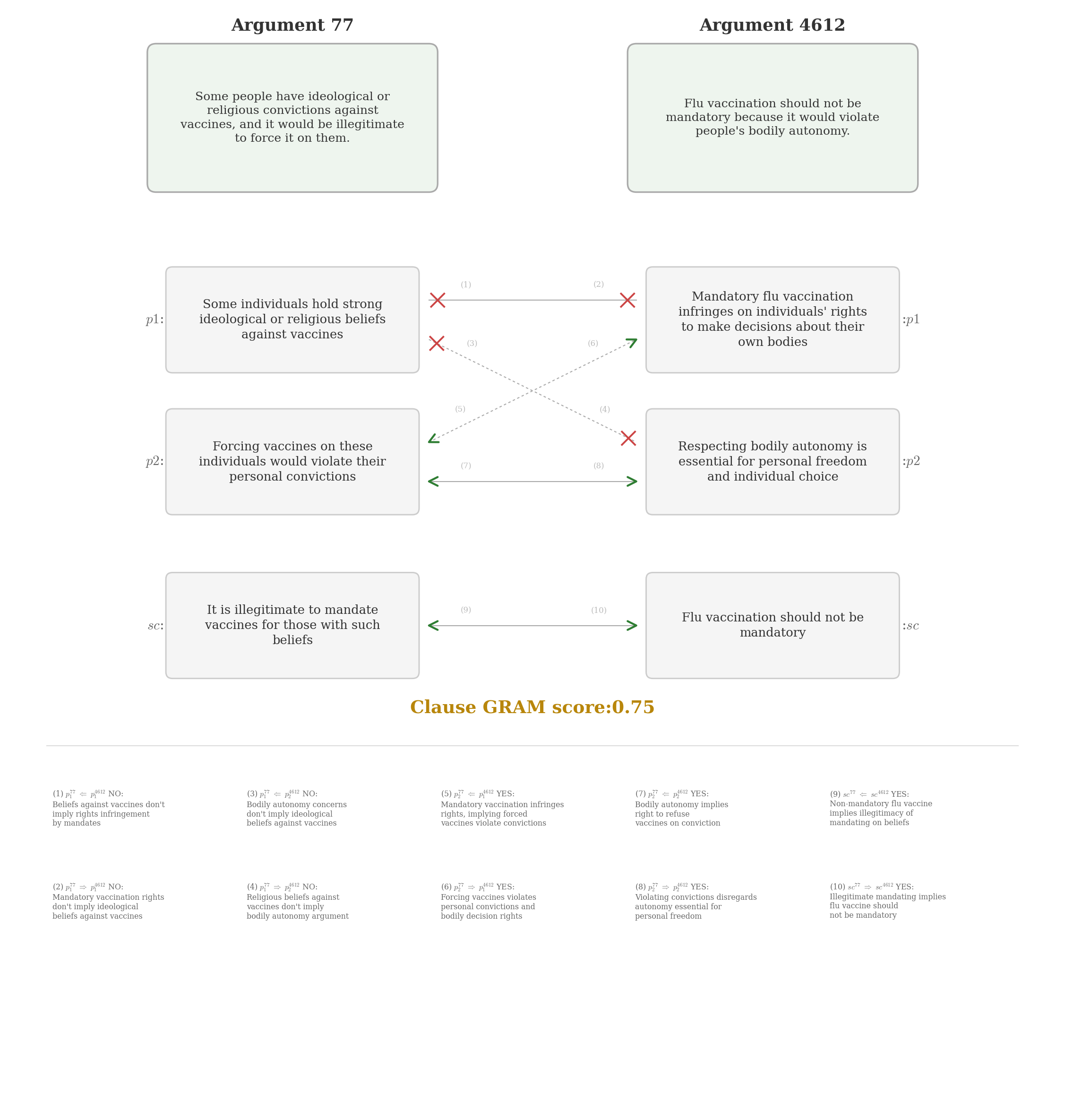}
    \begin{minipage}{\columnwidth}
    \footnotesize
    \textit{Notes:} \exhibitspecific
    \end{minipage}
\end{figure}

\subsection{F1 Scores}

In the main body of the paper we reported F1-scores for the performance of logical embeddings on a standard classification task. Here we provide additional results. 
The logical embeddings used in the evaluation in the paper retain the top 100 eigenvectors of Gram matrix. When all eigenvectors are retained, the neural network model used as one of the evaluation models suffers from severe overfitting due to the high effective dimensionality of the embedding space. Figure~\ref{fig:eigenvalue_sweep_f1} illustrates this: test F1 decreases as more eigenvectors are included, while the train-test gap widens. 100 eigenvectors are chosen as the value that maximizes the F1.

\begin{figure}[t]
    \centering
    \caption{Neural network F1 score by number of eigenvectors}
    \label{fig:eigenvalue_sweep_f1}
    \includegraphics[width=0.75\columnwidth]{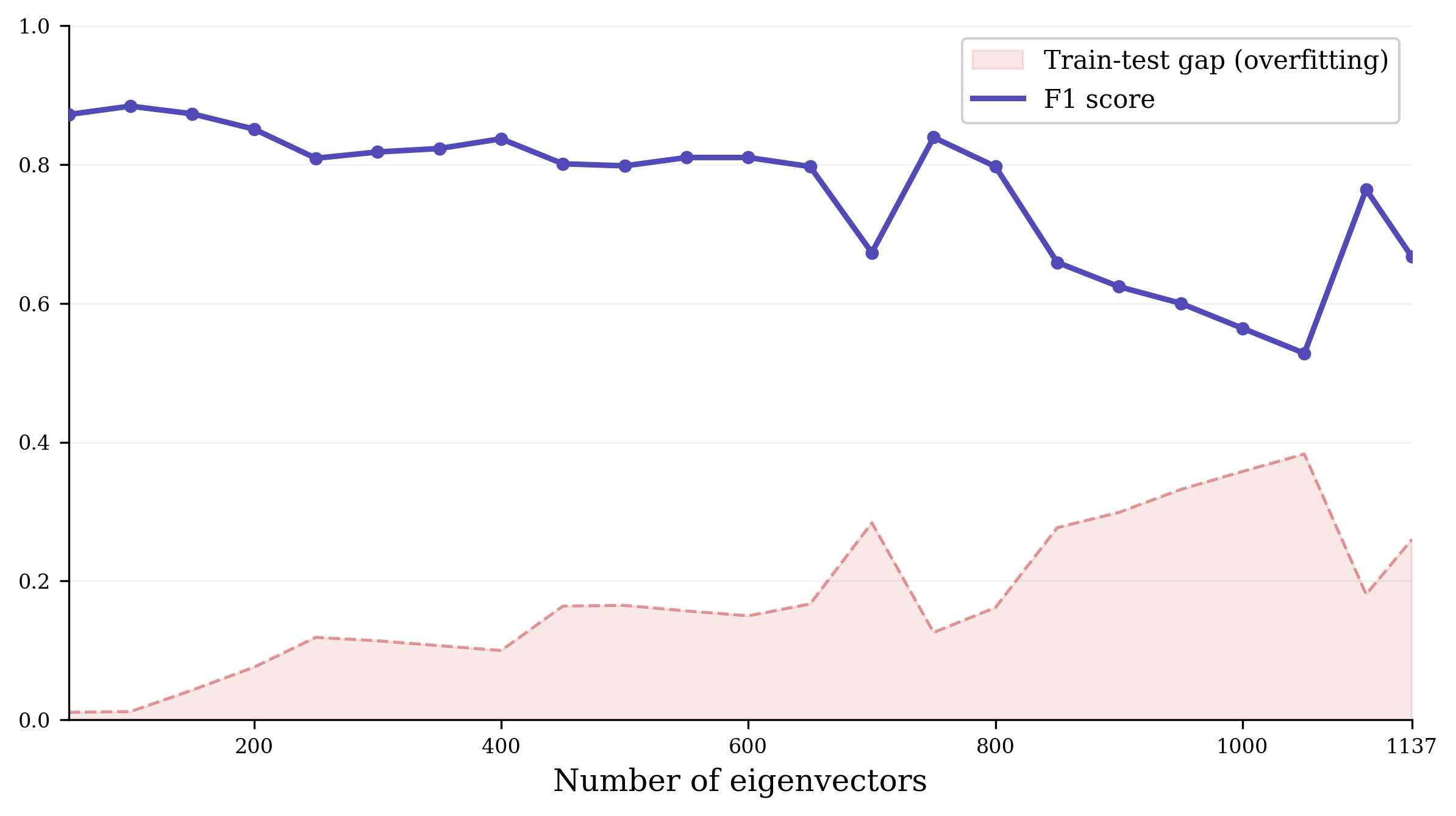}
    \begin{minipage}{\columnwidth}
    \footnotesize
    \textit{Notes:} \eigenvaluesweep
    \end{minipage}
\end{figure}

\subsection{Practical implementation}

In the paper, Theorem~3 establishes that
\[
\operatorname{sim}^{\sigma}(a,b)
=
\sigma \, k_T\bigl(\Pi(a),\Pi(b)\bigr)
+
(1-\sigma)\, k_T\bigl(\rho(a),\rho(b)\bigr)
\]
is a positive semi-definite kernel on $\operatorname{Arg}(\mathcal L)$, provided that $\Pi$ and $\rho$ are \emph{fixed} maps from arguments to reduced representations. The essential requirement is that the same reduction rule be applied to every argument independently of the particular pair under comparison.

In the main empirical analysis, however, the implementation is pairwise. For a given pair of arguments $(a,b)$, equivalence classes are constructed using only the clauses appearing in those two arguments. Thus, instead of a single global reduction map $\Pi(a)=\faktor{\Phi}{\equiv}$, the implementation uses a pair dependent map
\[
\Pi_{a,b}(a)=\faktor{\Phi}{\equiv_{a,b}},
\]
. Different entries of the Gram matrix are computed under different equivalence relations. The fixed-map argument underlying Theorem~3 therefore no longer applies, and positive semi-definiteness is not guaranteed.

However the pairwise estimator remains substantively the best way to compute the true similarity scores. It is closer to the way a human evaluator would compare arguments, because it focuses only on the local logical relation between the two arguments at hand and it is also easier to trace since when two arguments are matched, one can directly inspect which premises and subconclusions were identified as equivalent within that particular comparison. Empirically, this pairwise implementation yields the strongest performance.

The main difficulty in constructing a PSD alternative is therefore not simply restoring a fixed map, but to do so without generating an excessively general global transitivity. A naive global equivalence relation based on connected components in the bidirectional entailment graph creates a small number of extremely large components. Due to the large number of arguments, and for LLM fluctuation in evaluations in these clusters, clauses may be reduced not because they are directly close in meaning, but because they are connected through long chains of bidirectional entailment. The practical problem is therefore to reduce these giant global components into smaller and more coherent equivalence classes while keeping the map fixed (and so all the properties).

This global robustness implementation addresses this issue through a structural graph partition procedure. First, we build the clause-level graph induced by bidirectional entailment relations. Each node is a clause, and an edge is present whenever two clauses are judged to be in bidirectional entailment. We then compute the connected components of this graph. Any component whose size already falls below a fixed threshold $M$ is left unchanged (that we put to be  $M=20$).

However the big majority of connections falls into 4 big clusters that exceeds this threshold and where we don't collapse it directly into one equivalence class. Instead, we recursively partition it by a balanced spectral split. To do so, we assign each edge a structural weight
\[
w(i,j)=1+\lvert \Gamma(i)\cap \Gamma(j)\rvert,
\]
where $\Gamma(i)$ denotes the neighborhood of clause $i$ in the entailment graph. Hence, edges whose endpoints share many neighbors are treated as stronger and more internally supported, while edges with little local support are easier to cut. On the weighted subgraph induced by the oversized component, we use the Fiedler vector of the Laplacian matrix to produce a bipartition. The split is applied recursively on the median, chosen to be the one that creates the more balanced partition, were all components have the same $M=20$ size.

Only after this recursive reduction do we collapse clauses into equivalence classes. That is, each final component of size at most $M$ is treated as one global equivalence class and represented by a single node. This yields fixed global reduction maps $\Pi^{G}$ and $\rho^{G}$ prior to any pairwise comparison. The resulting similarity can therefore be written as
\[
K^{G}_{\sigma}(a,b)
=
\sigma \, k_T\bigl(\Pi^{G}(a),\Pi^{G}(b)\bigr)
+
(1-\sigma)\, k_T\bigl(\rho^{G}(a),\rho^{G}(b)\bigr).
\]
Since $\Pi^{G}$ and $\rho^{G}$ are defined once and for all, independently of the particular pair $(a,b)$, Theorem~3 applies directly and the corresponding Gram matrix is positive semi-definite by construction.

While so this sacrifices local flexibility, since every clause must ultimately belong to a single global equivalence class, it imposes a much more disciplined reduction, because giant transitive components are recursively broken into bounded-size subcomponents by a structural partition rule rather than being collapsed wholesale.

\begin{table}[h!]
\footnotesize
\centering
\captionsetup{justification=centering}
\caption{\textsc{F1 scores (100 eigenvectors)}}\label{TABLE:F1 results_2}
\resizebox{0.75\columnwidth}{!}{%
\begin{tabular}{lccccc}
\toprule\toprule
\emph{Estimation:} & \multicolumn{5}{c}{F1 Scores}\\\cmidrule(lr){2-6}
& \multicolumn{1}{c}{Logistic} & \multicolumn{1}{c}{Lasso} & \multicolumn{1}{c}{Ridge} &  \multicolumn{1}{c}{Neural Network} & \multicolumn{1}{c}{Random Forest} \\
\midrule\\
\input{Figures/table_f1_comparison_global}
\end{tabular}
}
\captionsetup{font={scriptsize}, width={\columnwidth}, justification=justified}
\caption*{\textit{Notes}: \mainresults}
\end{table}

\subsubsection{LLM prompts}
\label{app:llm-prompts}

This section reports the prompts used for the LLM-based components of the analysis. We use two distinct prompts. The first prompt is a direct full-argument similarity score between two arguments. This prompt is used to construct the LLM-based comparison Gram matrix and to generate the Figure \ref{fig:exhibit_figure}. The second prompt is the main one used in the construction of Logical Embeddings. It asks the model to evaluate directed entailment between clause-level statements. Logical equivalence is then defined by mutual entailment.

\paragraph{Full-argument similarity prompt.}

The following prompt is used to obtain direct LLM-based argument similarity scores. The model receives two complete arguments and returns a continuous score between 0 and 1, together with a short explanation.

\begin{lstlisting}[basicstyle=\ttfamily\small, breaklines=true]
You are assessing the degree to which two arguments share the same underlying logical content.

Assign a continuous score from 0 to 1 reflecting how much the logical content of these two arguments overlap, where 0 means the arguments share no logical content whatsoever and 1 means the arguments are logically equivalent. Keep it within two decimal places.

Important: focus on the logical structure and inferential content, not the topic or surface wording.

Consider the following two arguments:

Argument 1: {text1}
Argument 2: {text2}

Please provide your response as a valid JSON object:
{
  "sentence_id_1": "{sentence_id_1}",
  "sentence_id_2": "{sentence_id_2}",
  "answer": "N/A",
  "score": <a float between 0 and 1>,
  "reasoning": "brief reasoning for your score",
  "comment": "N/A"
}

Ensure the response is strictly a valid JSON object with no extra characters or formatting.
\end{lstlisting}

\paragraph{Clause-level entailment prompt.}

The following prompt is used to evaluate directed entailment between clause-level statements. For each ordered pair of clauses, the model returns a YES/NO entailment judgment, a short reasoning, and a confidence score. We run the prompt in both directions. Two clauses are treated as logically equivalent only when both directed entailment judgments are positive.

\begin{lstlisting}[basicstyle=\ttfamily\small, breaklines=true]
You are working on assessing whether statements made in certain contexts entail one another. In natural language, logical implication makes sense in context. For example, if a statement speaks about a ruler and another one speaks about a king, they may be talking about the same figure of authority, if both statements talk about or within a monarchy.

Question: Do the ideas entailed in Statement 1 imply the ideas entailed in Statement 2? Answer with YES or NO, state a brief reasoning for your answer, then give a score from 1 to 10 evaluating how confident you are with your answer (1 means 'I am absolutely not confident about my answer' and 10 means 'I am completely sure about my assessment'), and give a brief comment on your level of certainty on your answer.

Please provide your response as a valid JSON object in the following format:

Please consider the following statements:

Statement 1: {text1}
Statement 2: {text2}

{
  "sentence_id_1": "{sentence_id_1}",
  "sentence_id_2": "{sentence_id_2}",
  "answer": "Your YES or NO answer goes here (as a string)",
  "reasoning": "The reasoning behind your answer goes here (as a string)",
  "score": "A score between 0 and 10 based on your confidence (as an integer)",
  "comment": "Additional comments on your confidence here (as a string)"
}

Ensure the response is strictly a valid JSON object with no extra characters or formatting.
\end{lstlisting}

\end{document}

%% file: notes.tex
\newcommand{\mainresults}{This Table presents the main F1 scores of the paper. All classifiers are estimated on a 70/30 train-test split of 1137 arguments. Logistic, Lasso and Ridge are linear models with L1, L1 and L2 penalties respectively. Random Forest uses 200 trees. The Neural Network is a single hidden layer network whose hyperparameters are selected via 5-fold validation grid search with early stopping. Logical emb. refers to the clause-level logical embedding using the top 100 eigenvectors of the pairwise entailment score matrix. GloVe uses 100-dimensional averaged word vectors. BERT and RoBERTa use 768-dimensional CLS token embeddings. SBERT uses 384-dimensional sentence embeddings. GPT uses OpenAI text-embedding-3-small (1536 dimensions). All embeddings are standardised before estimation. Bold indicates the highest F1 per column. }

\newcommand{\exhibit}{The figure shows how full-text similarity scores could not correspond to argument equivalence. The left argument receives a high score because it shares the same abstract concept of necessity despite concerning a different domain. The right argument instead have a lower score despite topic overlap because it is relative to a more specific context. This motivates distinguishing between surface level semantic similarity and argument level equivalence. }

\newcommand{\exhibitspecific}{This figure illustrates how the Clause GRAM score is constructed for a pair of arguments. Arguments 77 and 4612 are decomposed into clause-level components (two premises and one conclusion). Green \textgreater denote positive entailment judgments, while red \texttimes denote negative ones. In this example, the first premise of argument 77 is not bidirectionally entailed by either premise of argument 4612, whereas the second premise is bidirectionally entailed with both premises of argument 4612, producing a score of 0.5. The two conclusions are instead bidirectionally entailed, yielding a conclusion level score of 1. With equal weight assigned to premise and conclusion components, the resulting Clause GRAM score is 0.75. The bottom panel reports the LLM reasoning used to determine whether each pair of clauses is entailed or not. }

\newcommand{\networkcomparison}{The Figure reports, for the subset of arguments on doping, the network of pairwise connections generated by each method. Each node represents an argument and each edge represents a non-zero connection between two arguments. Blue nodes denote pro-doping arguments, while red nodes denote con-doping arguments. }

\newcommand{\gramconfiguration}{The Figure shows how the Logical Embeddings score is computed for each possible entailment configuration in the case of an argument composed of two premises and one subconclusion. Green nodes denote premises and purple nodes denote subconclusions. A full node indicates a covered clause, a clause for which at least one mutual entailment is identified. Only mutual entailments are represented by arrows. }

\newcommand{\eigenvaluesweep}{The Figure show the behaviour of the Neural Network model under changes of eigenvector retained. The solid line reports the test F1 score of the neural network across different numbers of eigenvectors. The shaded area represents the gap in F1 between the training and the test scores. }

%% file: Figures/table_f1_comparison_pairwise.tex
Logical emb. & 0.842 & 0.883 & \textbf{0.869} & \textbf{0.884} & 0.857 \\
GloVe & 0.668 & 0.700 & 0.668 & 0.667 & 0.667 \\
BERT & 0.684 & 0.775 & 0.665 & 0.766 & 0.694 \\
SBERT & 0.674 & 0.761 & 0.670 & 0.781 & 0.760 \\
RoBERTa & 0.725 & 0.696 & 0.688 & 0.751 & 0.733 \\
GPT & 0.826 & 0.837 & 0.780 & 0.872 & 0.840 \\
\midrule
GloVe + Logical emb. & 0.766 & 0.814 & 0.793 & 0.834 & 0.871 \\
BERT + Logical emb. & 0.844 & 0.838 & 0.693 & 0.803 & 0.866 \\
SBERT + Logical emb. & 0.787 & 0.845 & 0.757 & 0.860 & 0.871 \\
RoBERTa + Logical emb. & 0.836 & 0.868 & 0.752 & 0.825 & 0.870 \\
GPT + Logical emb. & \textbf{0.871} & \textbf{0.885} & 0.781 & 0.875 & \textbf{0.890} \\

%% file: Figures/table_f1_comparison_global.tex
Logical emb. & 0.848 & 0.851 & \textbf{0.853} & 0.840 & 0.829 \\
GloVe & 0.668 & 0.700 & 0.668 & 0.667 & 0.667 \\
BERT & 0.667 & 0.775 & 0.665 & 0.766 & 0.694 \\
SBERT & 0.674 & 0.761 & 0.670 & 0.781 & 0.760 \\
RoBERTa & 0.723 & 0.696 & 0.688 & 0.751 & 0.733 \\
GPT & 0.832 & 0.837 & 0.780 & 0.871 & 0.843 \\
\midrule
GloVe + Logical emb. & 0.761 & 0.820 & 0.819 & 0.828 & 0.837 \\
BERT + Logical emb. & 0.837 & 0.833 & 0.748 & 0.824 & 0.799 \\
SBERT + Logical emb. & 0.768 & 0.855 & 0.771 & 0.768 & 0.833 \\
RoBERTa + Logical emb. & 0.838 & 0.851 & 0.728 & 0.806 & 0.811 \\
GPT + Logical emb. & \textbf{0.864} & \textbf{0.880} & 0.807 & \textbf{0.879} & \textbf{0.880} \\

%% file: FinalVersion.bib
@inproceedings{Fukumizu2008,
 author = {Fukumizu, Kenji and Gretton, Arthur and Sch\"{o}lkopf, Bernhard and Sriperumbudur, Bharath K.},
 booktitle = {Advances in Neural Information Processing Systems},
 editor = {D. Koller and D. Schuurmans and Y. Bengio and L. Bottou},
 pages = {},
 publisher = {Curran Associates, Inc.},
 title = {Characteristic Kernels on Groups and Semigroups},
 volume = {21},
 year = {2008}
}

@article{fukumizu2004dimensionality,
  title={Dimensionality reduction for supervised learning with reproducing kernel Hilbert spaces},
  author={Fukumizu, Kenji and Bach, Francis R and Jordan, Michael I},
  journal={Journal of Machine Learning Research},
  volume={5},
  number={Jan},
  pages={73--99},
  year={2004}
}

@Inbook{Passerini2013,
author="Passerini, Andrea",
editor="Bianchini, Monica
and Maggini, Marco
and Jain, Lakhmi C.",
title="Kernel Methods for Structured Data",
bookTitle="Handbook on Neural Information Processing",
year="2013",
publisher="Springer Berlin Heidelberg",
address="Berlin, Heidelberg",
pages="283--333",
isbn="978-3-642-36657-4",
doi="10.1007/978-3-642-36657-4_9"
}

@article{Aronszajn1950,
 ISSN = {00029947},
 author = {N. Aronszajn},
 journal = {Transactions of the American Mathematical Society},
 number = {3},
 pages = {337--404},
 publisher = {American Mathematical Society},
 title = {Theory of Reproducing Kernels},
 volume = {68},
 year = {1950}
}

@book{Paulsen_Raghupathi_2016,
place={Cambridge},
series={Cambridge Studies in Advanced Mathematics},
title={An Introduction to the Theory of Reproducing Kernel Hilbert Spaces},
publisher={Cambridge University Press},
author={Paulsen, Vern I. and Raghupathi, Mrinal},
year={2016},
collection={Cambridge Studies in Advanced Mathematics}
}

@inproceedings{reimers2019,
    title = "Classification and Clustering of Arguments with Contextualized Word Embeddings",
    author = "Reimers, Nils  and
      Schiller, Benjamin  and
      Beck, Tilman  and
      Daxenberger, Johannes  and
      Stab, Christian  and
      Gurevych, Iryna",
    editor = "Korhonen, Anna  and
      Traum, David  and
      M{\`a}rquez, Llu{\'\i}s",
    booktitle = "Proceedings of the 57th Annual Meeting of the Association for Computational Linguistics",
    year = "2019",
    address = "Florence, Italy",
    publisher = "Association for Computational Linguistics",
    doi = "10.18653/v1/P19-1054",
    pages = "567--578"
}

@inproceedings{toledo2019,
    title = "Automatic Argument Quality Assessment - New Datasets and Methods",
    author = "Toledo, Assaf  and
      Gretz, Shai  and
      Cohen-Karlik, Edo  and
      Friedman, Roni  and
      Venezian, Elad  and
      Lahav, Dan  and
      Jacovi, Michal  and
      Aharonov, Ranit  and
      Slonim, Noam",
    editor = "Inui, Kentaro  and
      Jiang, Jing  and
      Ng, Vincent  and
      Wan, Xiaojun",
    booktitle = "Proceedings of the 2019 Conference on Empirical Methods in Natural Language Processing and the 9th International Joint Conference on Natural Language Processing (EMNLP-IJCNLP)",
    year = "2019",
    address = "Hong Kong, China",
    publisher = "Association for Computational Linguistics",
    doi = "10.18653/v1/D19-1564",
    pages = "5625--5635"
}

@article{Stab2017,
    author = {Stab, Christian and Gurevych, Iryna},
    title = "{Parsing Argumentation Structures in Persuasive Essays}",
    journal = {Computational Linguistics},
    volume = {43},
    number = {3},
    pages = {619-659},
    year = {2017},
    issn = {0891-2017},
    doi = {10.1162/COLI_a_00295}
}

@book{govier2013practical,
  title={A practical study of argument},
  author={Govier, Trudy},
  year={2013},
  publisher={Cengage Learning}
}

@inproceedings{peldszus-stede-2015-joint,
    title = "Joint prediction in {MST}-style discourse parsing for argumentation mining",
    author = "Peldszus, Andreas  and
      Stede, Manfred",
    editor = "M{\`a}rquez, Llu{\'\i}s  and
      Callison-Burch, Chris  and
      Su, Jian",
    booktitle = "Proceedings of the 2015 Conference on Empirical Methods in Natural Language Processing",
    month = sep,
    year = "2015",
    address = "Lisbon, Portugal",
    publisher = "Association for Computational Linguistics",
    doi = "10.18653/v1/D15-1110",
    pages = "938--948",
}

@article{Palau2011,
author = {Mochales-Palau, Raquel  and Moens, Marie-Francine},
title = {Argumentation mining},
year = {2011},
issue_date = {March 2011},
publisher = {Kluwer Academic Publishers},
address = {USA},
volume = {19},
number = {1},
issn = {0924-8463},
doi = {10.1007/s10506-010-9104-x},
journal = {Artif. Intell. Law},
pages = {1–22},
numpages = {22}
}

@misc{Liu2023,
      title={Unsupervised Argument Similarity via Sentence Compression}, 
      author={Siyi Liu},
      year={2023},
      eprint={2302.12490},
      archivePrefix={arXiv},
}

@inproceedings{Amgoud2018,
  TITLE = {{Measuring Similarity between Logical Arguments}},
  AUTHOR = {Amgoud, Leila and David, Victor},
  URL = {https://hal.science/hal-02325829},
  BOOKTITLE = {{16th International Conference on Principles of Knowledge Representation and Reasoning (KR 2018)}},
  ADDRESS = {Tempe, United States},
  PAGES = {1-21},
  YEAR = {2018}
}

@phdthesis{David2021,
  TITLE = {{Dealing with Similarity in Argumentation}},
  AUTHOR = {David, Victor},
  NUMBER = {2021TOU30055},
  SCHOOL = {{Universit{\'e} Paul Sabatier - Toulouse III}},
  YEAR = {2021}
}

@article{simpson2013mathematical,
  title={Mathematical logic},
  author={Simpson, Stephen G},
  journal={Lecture Notes for Introductory Courses in Mathematical Logic. The Pennsylvania State University, University Park, State College},
  year={2013}
}

@article{Besnard2001,
title = {A logic-based theory of deductive arguments},
journal = {Artificial Intelligence},
volume = {128},
number = {1},
pages = {203-235},
year = {2001},
issn = {0004-3702},
doi = {https://doi.org/10.1016/S0004-3702(01)00071-6},
author = {Philippe Besnard and Anthony Hunter}
}

@article{russell1995modern,
  title={Artificial Intelligence: A modern approach},
  author={Russell, Stuart and Norvig, Peter},
  journal={Artificial Intelligence. Prentice-Hall, Englewood Cliffs},
  volume={25},
  number={27},
  pages={79--80},
  year={1995}
}

@inproceedings{amgoud2021compilation,
  title={Similarity Measures Based on Compiled Arguments},
  author={Amgoud, Leila and David, Victor},
  booktitle={Symbolic and Quantitative Approaches to Reasoning with Uncertainty: 16th European Conference, ECSQARU 2021, Prague, Czech Republic, September 21--24, 2021, Proceedings 16},
  pages={32--44},
  year={2021},
  organization={Springer}
}

@article{jaccard1908nouvelles,
  title={Nouvelles recherches sur la distribution florale},
  author={Jaccard, Paul},
  journal={Bull. Soc. Vaud. Sci. Nat.},
  volume={44},
  pages={223--270},
  year={1908}
}

@inproceedings{Steck2024,
  title={Is Cosine-Similarity of Embeddings Really About Similarity?},
  author={Harald Steck and Chaitanya Ekanadham and Nathan Kallus},
  year={2024}
}

@inproceedings{maccartney-manning-2008-modeling,
    title = "Modeling Semantic Containment and Exclusion in Natural Language Inference",
    author = "MacCartney, Bill  and
      Manning, Christopher D.",
    editor = "Scott, Donia  and
      Uszkoreit, Hans",
    booktitle = "Proceedings of the 22nd International Conference on Computational Linguistics (Coling 2008)",
    year = "2008",
    address = "Manchester, UK",
    publisher = "Coling 2008 Organizing Committee",
    pages = "521--528",
}

@misc{wang2021entailmentfewshotlearner,
      title={Entailment as Few-Shot Learner}, 
      author={Sinong Wang and Han Fang and Madian Khabsa and Hanzi Mao and Hao Ma},
      year={2021},
      eprint={2104.14690},
      archivePrefix={arXiv}
}

@misc{pilault2022conditionallyadaptivemultitasklearning,
      title={Conditionally Adaptive Multi-Task Learning: Improving Transfer Learning in NLP Using Fewer Parameters \& Less Data}, 
      author={Jonathan Pilault and Amine Elhattami and Christopher Pal},
      year={2022},
      eprint={2009.09139},
      archivePrefix={arXiv}
}

@misc{zhang2020semanticsawarebertlanguageunderstanding,
      title={Semantics-aware BERT for Language Understanding}, 
      author={Zhuosheng Zhang and Yuwei Wu and Hai Zhao and Zuchao Li and Shuailiang Zhang and Xi Zhou and Xiang Zhou},
      year={2020},
      eprint={1909.02209},
      archivePrefix={arXiv},
      primaryClass={cs.CL}
}

@article{vaswani2017attention,
  title={Attention is all you need},
  author={Vaswani, Ashish and Shazeer, Noam and Parmar, Niki and Uszkoreit, Jakob and Jones, Llion and Gomez, Aidan N and Kaiser, {\L}ukasz and Polosukhin, Illia},
  journal={Advances in neural information processing systems},
  volume={30},
  year={2017}
}

@book{Shawe-Taylor2004,
  title={Kernel Methods for Pattern Analysis},
  author={Shawe-Taylor, John and Cristianini, Nello},
  year={2004},
  publisher={Cambridge University Press},
  address={Cambridge},
}

@article{EckartYoung1936,
  author    = {Carl Eckart and Gale Young},
  title     = {The approximation of one matrix by another of lower rank},
  journal   = {Psychometrika},
  volume    = {1},
  number    = {3},
  pages     = {211--218},
  year      = {1936},
  doi       = {10.1007/BF02288367}
}

@article{Scholkopf1998,
  author    = {Bernhard Sch\"{o}lkopf and Alexander Smola and Klaus-Robert M\"{u}ller},
  title     = {Nonlinear component analysis as a kernel eigenvalue problem},
  journal   = {Neural Computation},
  volume    = {10},
  number    = {5},
  pages     = {1299--1319},
  year      = {1998},
  doi       = {10.1162/089976698300017467}
}

@book{ScholkopfSmola2002,
  author    = {Bernhard Sch\"{o}lkopf and Alexander J. Smola},
  title     = {Learning with Kernels: Support Vector Machines, Regularization,
               Optimization, and Beyond},
  publisher = {MIT Press},
  address   = {Cambridge, MA},
  year      = {2002},
  isbn      = {978-0262194754}
}
